\newtheorem{theorem}{Theorem}
\newtheorem*{theorem*}{Theorem}
\newtheorem{lemma}{Lemma}
\newtheorem{corollary}{Corollary}
\newtheorem{example}{Example}
\newcommand{\R}{\mathbb{R}} 
\newcommand{\N}{\mathcal{N}}
\newcommand{\E}{\mathbb{E}}
\renewcommand{\part}[2]{\frac{\partial #1}{\partial #2}}
\newcommand{\Tr}{\mathrm{Tr}}
\newcommand{\Var}{\mathrm{Var}}
\newcommand{\HS}{\mathrm{HS}}
\newcommand{\Lg}{L}
\newcommand{\Lh}{M}
\newcommand{\op}{\mathrm{op}}
\newcommand{\PLA}{\mathrm{PLA}}
\newcommand{\ULA}{\mathrm{ULA}}
\newcolumntype{C}{>{\centering\arraybackslash}m{.25\linewidth}}
\newcolumntype{D}{>{\centering\arraybackslash}m{.24\linewidth}}
\newcolumntype{E}{>{\centering\arraybackslash}m{.18\linewidth}}
\newcolumntype{F}{>{\centering\arraybackslash}m{.18\linewidth}}
\title{Proximal Langevin Algorithm: \\ Rapid Convergence Under Isoperimetry}
\author{Andre Wibisono \\ 
College of Computing \\
Georgia Institute of Technology \\
\texttt{wibisono@gatech.edu}}
\begin{document}

\maketitle

\begin{abstract}
We study the Proximal Langevin Algorithm (PLA) for sampling from a probability distribution $\nu = e^{-f}$ on $\R^n$ under isoperimetry.
We prove a convergence guarantee for PLA in Kullback-Leibler (KL) divergence when $\nu$ satisfies log-Sobolev inequality (LSI) and $f$ has bounded second and third derivatives.
This improves on the result for the Unadjusted Langevin Algorithm (ULA), and matches the fastest known rate for sampling under LSI (without Metropolis filter) with a better dependence on the LSI constant.
We also prove convergence guarantees for PLA in R\'enyi divergence of order $q > 1$ when the biased limit satisfies either LSI or Poincar\'e inequality.
\end{abstract}

\section{Introduction}

Sampling is a fundamental algorithmic task.
While the case of logconcave sampling is relatively well-studied, there are recent efforts in understanding the convergence guarantees for non-logconcave sampling.
This is motivated by practical applications which require sampling complicated distributions in high-dimensional spaces, as well as by the recent progress in non-convex optimization.

In this paper we study the {\em Proximal Langevin Algorithm (PLA)} for sampling from a probability distribution $\nu = e^{-f}$ on $\R^n$:
\begin{align*}
x_{k+1} = x_k - \epsilon \nabla f(x_{k+1}) + \sqrt{2\epsilon} z_k
\end{align*}
where $\epsilon > 0$ is step size and $z_k \sim \N(0,I)$ is an independent Gaussian random variable in $\R^n$.
The above is an implicit update, and we assume we can solve for $x_{k+1}$, for example via the proximal step (see Section~\ref{Sec:PLA} for more detail):
\begin{align*}
x_{k+1} = \arg\min_{x \in \R^n} \left\{ f(x) + \frac{1}{2\epsilon} \|x-(x_k + \sqrt{2\epsilon} z_k)\|^2 \right\}.
\end{align*}

PLA is a discretization of the continuous-time Langevin dynamics that uses the backward (implicit) method for the gradient.
It is an implicit variant of the Unadjusted Langevin Algorithm (ULA), which uses the forward (explicit) method for the gradient.
PLA was introduced by Pereyra~\cite{P16} (in a more general form with Metropolis filter) from a smoothing perspective, and it was also proposed by Bernton~\cite{B18} from an optimization perspective of sampling.
PLA has been widely applied in practice, in particular when combined with ULA to sample from composite distributions~\cite{DMP18,CPM18,RPW19,CKCetal19}, and analyzed under logconcavity or strong logconcavity~\cite{B18, BDMP17, BSS19, SKR19}.
Analogous to implicit vs.\ explicit methods in optimization, we expect PLA to have better properties than ULA at the cost of a more expensive per-iteration complexity (solving an optimization problem).
See also~\cite{HSR19, VPZ19} for some recent extensions of PLA.

In this paper we prove convergence guarantees for PLA under {\em isoperimetry}, namely, when the target distribution $\nu$ satisfies either log-Sobolev inequality (LSI) or Poincar\'e inequality.
Isoperimetry is a natural relaxation of logconcavity that still allows for fast sampling in continuous time.
Strong logconcavity (SLC) implies LSI, and in turn implies Poincar\'e inequality with the same constant.
Moreover, logconcavity and bounded diameter implies LSI and Poincar\'e inequality.
However, isoperimetry is more general, as it is preserved under Lipschitz mapping and bounded perturbation, whereas logconcavity is not.
Therefore, there is a wide class of probability distributions, including multimodal ones, satisfying isoperimetry.

In continuous time, isoperimetry is sufficient for fast sampling.
For example, if $\nu$ satisfies LSI with constant $\alpha > 0$, then along the Langevin dynamics, the Kullback-Leibler (KL) divergence converges exponentially fast at rate $2\alpha$. 
This means the Langevin dynamics reaches KL divergence less than $\delta$ in time $t = O(\frac{1}{\alpha}\log \frac{1}{\delta})$.
In particular, there is no dependence on dimension and no assumption on the smoothness of $\nu$ is required, beyond differentiability in order to run the Langevin dynamics.
This is analogous to the exponential convergence of gradient flow for optimization in continuous time under gradient domination condition (via the perspective of sampling as optimization in the space of measures~\cite{JKO98,W18}).

In discrete time, sampling is more challenging.
We can discretize continuous-time dynamics to obtain algorithms, such as PLA or ULA above from the Langevin dynamics.
We need some smoothness assumptions (bounds on derivatives of $\nu$) to control the discretization error, so the iteration complexity now depends on the condition number.
However, the discretization error leads to an asymptotic {\em bias}, which means the algorithm converges to the wrong distribution.
This bias arises because in algorithms such as PLA or ULA we are applying mismatched splitting methods for solving a composite optimization problem in the space of measures; see~\cite{W18} for more discussion.

It is possible to remove the bias by applying the Metropolis filter (accept-reject step) in each iteration; this has a geometric interpretation as projection in total variation (TV) distance~\cite{BD01}.
With the Metropolis filter, it is possible to prove the algorithm still converges exponentially fast in discrete time, and obtain an iteration complexity of $O(\log \frac{1}{\delta})$ to reach error $\delta$ in TV distance with warm start and under various conditions such as strong logconcavity, isoperimetry, or distant dissipativity~\cite{BH13,DCWY18,MV19,MFWB19b}.
However, if we want convergence in KL divergence---which is stronger---then Metropolis filter does not work because it makes the distributions singular (have point masses).
Furthermore, Metropolis filter can slow down the algorithm in practice when the rejection probability is high.

In this paper we follow another approach, which is to control the convergence of the algorithm and the size of the bias, then choose a small enough step size to make the error less than any given threshold.
This approach was pioneered by Dalalyan~\cite{D17RSS,D17} and Durmus and Moulines~\cite{DM17} to analyze ULA under strong logconcavity, and has been extended to many other algorithms. 
However, the bias becomes a bottleneck in complexity.
The bias scales with some power of the step size $\epsilon$, resulting in an iteration complexity which is polynomial in $\frac{1}{\delta}$ (rather than logarithmic as in continuous time) to reach error $\delta$ in KL divergence.
For example, we show in~\cite{VW19} that under LSI and second-order smoothness, the bias of ULA  is $O(\epsilon)$, resulting in an iteration complexity of $\tilde O(\frac{1}{\delta})$ (ignoring dimension dependence for now).
However, basic considerations suggest the correct bias is $O(\epsilon^2)$ since ULA and PLA are first-order discretization, which will yield an iteration complexity of $\tilde O(\frac{1}{\sqrt{\delta}})$.
In this paper we show this is indeed the case for PLA under LSI and third-order smoothness.

Our main result is the following.
We say $\nu = e^{-f}$ is $(\Lg,\Lh)$-smooth if $\|\nabla^2 f\| \le \Lg$ and $\|\nabla^3 f\| \le \Lh$.
Here $H_\nu(\rho)$ is the KL divergence of $\rho$ with respect to $\nu$.
See Theorem~\ref{Thm:Main} in Section~\ref{Sec:PLALSI} for detail.

\begin{restatable*}{theorem}{ThmMain}\label{Thm:Main}
Assume $\nu$ satisfies $\alpha$-LSI and is $(\Lg,\Lh)$-smooth.
For any $x_0 \sim \rho_0$, the iterates $x_k \sim \rho_k$ of PLA with step size $0 < \epsilon \le \min\{ \frac{1}{8\Lg}, \frac{1}{\Lh}, \frac{3\alpha}{32 \Lg^2} \}$ satisfies:
\begin{align}\label{Eq:MainBound}
H_\nu(\rho_k) \le e^{-\alpha \epsilon k} H_\nu(\rho_0) + \frac{34 \, \epsilon^2 n(\Lg^3 + 9n^2 \Lh^2)}{\alpha}.
\end{align}
\end{restatable*}

This implies the following iteration complexity for PLA under LSI:
to reach $H_\nu(\rho_k) \le \delta$, it suffices to run PLA with $\rho_0 = \N(x^\ast,\frac{1}{L} I)$ and step size $\epsilon  = \Theta\left(\sqrt{\frac{\alpha \delta}{n(\Lg^3+n^2\Lh^2)}}\right)$ for
\begin{align}\label{Eq:PLAComp}
k 
= \tilde O \left( \frac{n^{\frac{1}{2}}(\Lg^{\frac{3}{2}} + n\Lh)}{\alpha^{\frac{3}{2}} \delta^{\frac{1}{2}}} \right)
\end{align}
iterations.
Here $x^\ast$ is a stationary point of $f$ ($\nabla f(x^\ast) = 0$), which we can find via gradient descent.

This improves on the result~\cite{VW19} for ULA, in which we show under $\alpha$-LSI and $(L,\infty)$-smoothness, ULA has iteration complexity $k = \tilde O\left( \frac{n \Lg^2}{\alpha^{2} \delta}\right)$.
However, as noted above, it is likely the analysis in~\cite{VW19} is not tight since it only implies a bias of $O(\epsilon)$ for ULA rather than $O(\epsilon^2)$ for PLA in Theorem~\ref{Thm:Main}.
We prove Theorem~\ref{Thm:Main} by comparing a continuous-time interpolation of PLA with the Langevin dynamics to establish a recurrence for the decrease of KL divergence in each iteration; this technique is similar to~\cite{VW19} and earlier papers~\cite{D17,CB18}.
Our improvement comes because we can show a tight error bound for the interpolation of PLA by comparing it with the weighted Langevin dynamics; see Section~\ref{Sec:ThmMainProof}.
Furthermore, we illustrate in the Gaussian case that the bias is indeed $\Theta(\epsilon^2)$.

\begin{table}[h!t!b!p!]
\begin{center}
  \begin{tabular}{ | C | D | E | F |}
    \hline
    {\bf Algorithm} & {\bf Assumptions} & {\bf Iterations to $H_\nu(\rho_k) \le \delta$} & {\bf Iterations to $W_2(\rho_k,\nu) \le \delta$} \\ \hline 
    \hline
    ULA~\cite{VW19} & $\alpha$-LSI, $(L,\infty)$-smoothness & $\tilde O\Big( \frac{n \Lg^2}{\alpha^{2} \delta}\Big)$ & $\tilde O\Big( \frac{n \Lg^2}{\alpha^{3} \delta^2}\Big)$ \\ \hline
    Underdamped Langevin dynamics~\cite{Ma19} & $\alpha$-LSI, $(L,M)$-smoothness & $\tilde O\Big(\frac{n^{\frac{1}{2}}(\Lg^{\frac{3}{2}} + n^{\frac{1}{2}} \Lh)}{\alpha^2 \delta^{\frac{1}{2}}}\Big)$ & $\tilde O\Big(\frac{n^{\frac{1}{2}}(\Lg^{\frac{3}{2}} + n^{\frac{1}{2}} \Lh)}{\alpha^{\frac{5}{2}} \delta}\Big)$ \\ \hline
    Randomized midpoint for ULD~\cite{SL19} & $\alpha$-SLC, $(L,\infty)$-smoothness & - & $\tilde O\Big( \frac{n^{\frac{1}{3}} L}{\alpha^{\frac{4}{3}} \delta^{\frac{2}{3}}} \Big)$ \\ \hline
    PLA (this paper) & $\alpha$-LSI, $(L,M)$-smoothness & $\tilde O \Big( \frac{n^{\frac{1}{2}}(\Lg^{\frac{3}{2}} + n\Lh)}{\alpha^{\frac{3}{2}} \delta^{\frac{1}{2}}} \Big)$ & $\tilde O \Big( \frac{n^{\frac{1}{2}}(\Lg^{\frac{3}{2}} + n\Lh)}{\alpha^2 \delta} \Big)$ \\
    \hline
  \end{tabular}
  \caption{Iteration complexities for Langevin algorithms under LSI, and the fastest under SLC. Here $H_\nu(\rho)$ is the KL divergence and $W_2(\rho,\nu)$ is the Wasserstein distance.}
\end{center}
\end{table}

We note a recent work~\cite{MFWB19} improves the analysis of ULA under LSI and third-order smoothness with an additional dissipativity assumption, and shows an iteration complexity for ULA which is similar to our result~\eqref{Eq:PLAComp} for PLA.
%\footnote{We note at the time of this submission there is a mistake in the proof of~\cite[Lemma~6]{MFWB19} which their authors are fixing (personal communication).}

Currently the fastest (in terms of error $\delta$ in KL divergence) algorithm for sampling under LSI is a discretization of the underdamped Langevin dynamics~\cite{Ma19}, which has iteration complexity $k = \tilde O\left(\frac{n^{1/2}(\Lg^{3/2} + n^{1/2} \Lh)}{\alpha^2 \delta^{1/2}}\right)$ under $\alpha$-LSI and $(L,M)$-smoothness to reach $H_\nu(\rho_k) \le \delta$.
We see from~\eqref{Eq:PLAComp} that PLA has the same dependence on $\delta$ but better dependence on $\alpha$.

We recall LSI implies Talagrand's inequality, which bounds Wasserstein distance by KL divergence $W_2(\rho,\nu)^2 \le \frac{2}{\alpha} H_\nu(\rho)$.
Then Theorem~\ref{Thm:Main} also implies the iteration complexity for PLA to reach $W_2(\rho_k,\nu) \le \delta$ under $\alpha$-LSI and $(\Lg,\Lh)$-smoothness is
\begin{align*}
k = \tilde O \left( \frac{n^{\frac{1}{2}}(\Lg^{\frac{3}{2}} + n\Lh)}{\alpha^2 \delta} \right).
\end{align*}
A previous analysis~\cite{B18} shows an iteration complexity of $k = \tilde O\Big(\frac{n\Lg}{\alpha \delta^2}\Big)$ for PLA to reach $W_2(\rho_k,\nu) \le \delta$ under $\alpha$-SLC and $(\Lg,\infty)$-smoothness.
Thus, our result shows a better iteration complexity for PLA under SLC and third-order smoothness.  % than previously known.

We note for sampling under SLC, faster rates are achieveable via more advanced algorithms, whose analyses are made possible by coupling techniques.
Currently the fastest algorithm is a randomized midpoint discretization of the underdamped Langevin dynamics~\cite{SL19}, which has iteration complexity $k = \tilde O\left( \frac{n^{1/3} L}{\alpha^{4/3} \delta^{2/3}} \right)$ to reach $W_2(\rho_k,\nu) \le \delta$ under $\alpha$-SLC and $(\Lg,\infty)$-smoothness, and it can be made faster by parallelizing.
See also~\cite{MMWBJ19} for a higher-order Langevin dynamics that achieves a similar iteration complexity under an additional separability assumption.
Previously the fastest results were by Hamiltonian Monte Carlo~\cite{LV18,CV19,MS17,MV18}, various discretization of the overdamped or underdamped Langevin dynamics~\cite{DR18,DM19,CB18,D17,DK19}, or using higher-order integrators such as stochastic Runge-Kutta~\cite{LWME19}. 
Thus, there is a gap between the known complexity for sampling under LSI and under SLC.
It is interesting to understand whether these more advanced algorithms can be analyzed under LSI, when coupling techniques no longer work.

We also note that for the case when $\nu$ is logconcave, there are other methods that can be used, including the ball walk and hit-and-run \cite{KLS97, LV07, LV06, LV06b}, which have iteration complexity with logarithmic dependence on the error $\delta$ in TV distance or $\chi^2$-divergence, and no dependence on the condition number.

Our second main result is a convergence guarantee for R\'enyi divergence of order $q > 1$ along PLA when the biased limit satisfies either LSI or Poincar\'e inequality.
R\'enyi divergence of order $q > 1$ is a stronger generalization of KL divergence (which is the case $q=1$) with fundamental applications in statistics, physics, and computer science~\cite{R61,VH14,BCG19,DR16}.
Under LSI, R\'enyi divergence converges exponentially fast along the Langevin dynamics.
Under Poincar\'e inequality, R\'enyi divergence still converges along the Langevin dynamics, but now at a rate which is initially linear, then exponential.
We show that when the biased limit $\nu_\epsilon$ of PLA satisfies either LSI or Poincar\'e inequality, R\'enyi divergence with respect to $\nu_\epsilon$ converges along PLA at the same speed as along the Langevin dynamics.
We can combine this with a decomposition property of R\'enyi divergence to obtain an iteration complexity for PLA in R\'enyi divergence which is controlled by the size of the bias; see Theorem~\ref{Thm:RenyiRate} in Section~\ref{Sec:PLARenyi} and Theorem~\ref{Thm:RenyiRatePoincare} in Section~\ref{Sec:PLARenyiPoincare}.
Furthermore, the iteration complexity under Poincar\'e inequality is a factor of $n$ larger than the complexity under LSI.
These results are similar to the result~\cite{VW19} for ULA.
However, we illustrate with an example in the Gaussian case that the bias in R\'enyi divergence of PLA is smaller (and always finite) than the bias of ULA (which can be infinite).

The rest of this paper is organized as follows.
In Section~\ref{Sec:PLA} we state the algorithm and main result on the convergence of KL divergence under LSI.
In Section~\ref{Sec:ResultRenyi} we state the second result on the convergence of R\'enyi divergence under LSI or Poincar\'e inequality.
In Section~\ref{Sec:Langevin} we review the Langevin dynamics.
In Sections~\ref{Sec:ProofPLA} and~\ref{Sec:ProofRenyi} we provide proofs and details.
We conclude with a discussion in Section~\ref{Sec:Disc}.

\section{Algorithm and main result}
\label{Sec:PLA}

Let $\nu = e^{-f}$ be the target probability distribution on $\R^n$.
We assume $f \colon \R^n \to \R$ is differentiable.

\subsection{Proximal Langevin Algorithm}

We study the {\bf Proximal Langevin Algorithm (PLA)} that starts
from any random variable $x_0  \in \R^n$ and maintains the iterates
\begin{align}\label{Eq:PLA}
x_{k+1} = x_k - \epsilon \nabla f(x_{k+1}) + \sqrt{2\epsilon} z_k
\end{align}
where $\epsilon > 0$ is step size and $z_k \sim \N(0,I)$ is an independent Gaussian random variable.
The above is an implicit update, and we assume we can solve for $x_{k+1}$, for example via the proximal step:
\begin{align}\label{Eq:PLA2}
x_{k+1} = \arg\min_{x \in \R^n} \left\{ f(x) + \frac{1}{2\epsilon} \|x-(x_k + \sqrt{2\epsilon} z_k)\|^2 \right\}.
\end{align}
Indeed, the solution of~\eqref{Eq:PLA2} satisfies $\nabla f(x_{k+1}) + \frac{1}{\epsilon}(x_{k+1}-(x_k + \sqrt{2\epsilon} z_k)) = 0$, which is~\eqref{Eq:PLA}.
Note that the formulation~\eqref{Eq:PLA2} also makes sense when $f$ is not differentiable.
If $f$ is $(1/\epsilon)$-smooth ($\|\nabla^2 f\| \le \frac{1}{\epsilon}$), then~\eqref{Eq:PLA2} is a strongly convex optimization problem with a unique minimizer $x_{k+1}$, so PLA is well-defined.
If $f$ is convex, then the restriction $\epsilon \le \frac{1}{\|\nabla^2 f\|}$ can be removed. % ; see also~\cite{HSR19}.

\begin{example}
Let $\nu = \N(0,\Sigma)$ be Gaussian with mean $0$ and covariance $\Sigma \succ 0$, so $\nabla f(x) = \Sigma^{-1} x$.
The PLA iteration is $x_{k+1} = A(x_k + \sqrt{2\epsilon} z_k)$, so $x_k \stackrel{d}{=} A^k x_0 + \sqrt{2\epsilon} A (I-A^2)^{-\frac{1}{2}} (I - A^{2k})^{\frac{1}{2}} \tilde z_k$ where $A = (I+\epsilon \Sigma^{-1})^{-1}$ and  $\tilde z_k \sim \N(0,I)$ is independent.
Note that for any $\epsilon > 0$, $A^k \to 0$ as $k \to \infty$.
Thus, for any $\epsilon > 0$, PLA converges to $\nu_\epsilon = \N(0,\Sigma_\epsilon)$ where $\Sigma_\epsilon = 2\epsilon A^{2}(I-A^2)^{-1}  
= \Sigma(I+\frac{\epsilon}{2}\Sigma^{-1})^{-1}$.
\end{example}

PLA (in a more general form with a Metropolis filter) was first introduced in~\cite{P16} from a smoothing perspective.
PLA was also studied in~\cite{B18} from the optimization perspective of sampling under logconcavity assumption.\footnote{Both~\cite{P16,B18} apply the proximal step before the Gaussian step in each iteration, while PLA applies them in the opposite order. Over $k$ iterations, both~\cite{P16,B18} and PLA only differ by a single proximal or Gaussian step.}
PLA is the implicit variant of another popular algorithm, the Unadjusted Langevin Algorithm (ULA), which is the explicit iteration $x_{k+1} = x_k - \epsilon \nabla f(x_k) + \sqrt{2\epsilon} z_k$.
PLA and ULA are discretization of the Langevin dynamics in continuous time (see Section~\ref{Sec:Langevin} for a review), where PLA applies the backward (implicit) method to discretize the gradient, while ULA applies the forward (explicit) method.
This makes PLA more expensive to implement in practice, but it offers better behavior and analysis than ULA.
For example, in the Gaussian case $\nu = \N(0,\Sigma)$, recall ULA converges to $\nu_\epsilon^{\text{ULA}} = \N(0,\Sigma_\epsilon^{\text{ULA}})$ only for $\epsilon < 2\|\Sigma^{-1}\|^{-1}$, where $\Sigma_\epsilon^{\text{ULA}} = \Sigma(I-\frac{\epsilon}{2}\Sigma^{-1})^{-1}$; see also~\cite{W18}.
In this case PLA always converges and the bias is smaller than the bias of ULA.

\begin{example}[PLA vs.\ ULA for Gaussian]
Let $\nu = \N(0,\Sigma)$.
For $\epsilon > 0$, the limit of PLA is $\nu_\epsilon^{\text{PLA}} = \N(0,\Sigma(I+\frac{\epsilon}{2}\Sigma^{-1})^{-1})$.
For $0 < \epsilon < 2\|\Sigma^{-1}\|^{-1}$, the limit of ULA is $\nu_\epsilon^{\text{ULA}} = \N(0,\Sigma(I-\frac{\epsilon}{2}\Sigma^{-1})^{-1})$.
Let $\lambda_1,\dots,\lambda_n > 0$ denote the eigenvalues of $\Sigma$.
The bias of PLA in relative entropy is
\begin{align*}
H_\nu(\nu_\epsilon^{\PLA}) 
= \frac{1}{2} \left(\Tr\left( I + \frac{\epsilon}{2} \Sigma^{-1} \right)  - n - \log \det \left(I + \frac{\epsilon}{2} \Sigma^{-1}\right) \right)
= \frac{1}{2} \sum_{i=1}^n \left( \frac{\epsilon}{2\lambda_i} - \log \left(1 + \frac{\epsilon}{2\lambda_i} \right) \right)
\end{align*}
while the bias of ULA is
\begin{align*}
H_\nu(\nu_\epsilon^{\ULA}) 
= \frac{1}{2} \left(\Tr\left( I - \frac{\epsilon}{2} \Sigma^{-1} \right)  - n - \log \det \left(I - \frac{\epsilon}{2} \Sigma^{-1}\right) \right)
= \frac{1}{2} \sum_{i=1}^n \left( -\frac{\epsilon}{2\lambda_i} - \log \left(1 - \frac{\epsilon}{2\lambda_i} \right) \right).
\end{align*}
Note that we always have
\begin{align*}
H_\nu(\nu_\epsilon^{\PLA}) < H_\nu(\nu_\epsilon^{\ULA}).
\end{align*}
Furthermore, 
$H_\nu(\nu_\epsilon^{\PLA}) = \frac{1}{16} \sum_{i=1}^n \big( \frac{\epsilon^2}{\lambda_i^2} - \frac{\epsilon^3}{3\lambda_i^3} \big) + O(\epsilon^4)$ and
$H_\nu(\nu_\epsilon^{\ULA}) = \frac{1}{16} \sum_{i=1}^n \big( \frac{\epsilon^2}{\lambda_i^2} + \frac{\epsilon^3}{3\lambda_i^3} \big)+O(\epsilon^4)$.
\end{example}

\subsection{Log-Sobolev inequality and smoothness assumption}

Before stating our results, we recall some definitions for the analysis.

\subsubsection{Log-Sobolev inequality}

Let $\rho, \nu$ be probability distributions on $\R^n$ with smooth densities and finite second moments.
We recall the {\em relative entropy} (or Kullback-Leibler (KL) divergence) of $\rho$ with respect to $\nu$ is
\begin{align}
H_\nu(\rho) = \int_{\R^n} \rho(x) \log \frac{\rho(x)}{\nu(x)} \, dx.
\end{align}
Relative entropy has the property that $H_\nu(\rho) \ge 0$, and $H_\nu(\rho) = 0$ if and only if $\rho = \nu$.
The {\em relative Fisher information} of $\rho$ with respect to $\nu$ is
\begin{align}\label{Eq:J}
J_\nu(\rho) = \int_{\R^n} \rho(x) \left\|\nabla \log \frac{\rho(x)}{\nu(x)}\right\|^2 dx.
\end{align}
The geometric meaning of relative Fisher information is as the squared gradient of relative entropy in the space of probability measures with the Wasserstein metric.

We recall $\nu$ satisfies {\em log-Sobolev inequality} (LSI) with constant $\alpha > 0$ if for all $\rho$,
\begin{align}\label{Eq:LSI}
H_\nu(\rho) \le \frac{1}{2\alpha} J_\nu(\rho).
\end{align}
LSI has the geometric interpretation as the gradient domination condition for relative entropy in the Wasserstein space~\cite{OV00}, which ensures the Langevin dynamics converges exponentially fast in continuous time; see Section~\ref{Sec:Langevin} for a review.
We recall $\nu = e^{-f}$ is {\em strongly logconcave} (SLC) with constant $\alpha > 0$ if $f$ is $\alpha$-strongly convex: $\nabla^2 f(x) \succeq \alpha I$ for all $x \in \R^n$.
SLC is a strong condition that allows the analysis of many sampling algorithms.
However, SLC is brittle, as it is not preserved under perturbation or arbitrary mapping.
A classic result by Bakry and \'Emery~\cite{BE85} shows that SLC implies LSI with the same constant $\alpha$.
Furthermore, LSI is more stable, as it is preserved under bounded perturbation and Lipschitz mapping.
LSI also has an isoperimetric content as a bound on the log-Cheeger constant, see for example~\cite{L94}.
Therefore, LSI provides a natural condition to obtain fast sampling in discrete time.

\subsubsection{Smoothness assumptions}

We say $\nu = e^{-f}$ is {\em $(\Lg,\Lh)$-smooth} if $f$ is three-times differentiable and satisfies the following two conditions:
\begin{enumerate}
  \item The gradient $\nabla f$ is $\Lg$-Lipschitz:
  \begin{align*}
  \|\nabla f(x) - \nabla f(y)\| \le \Lg \|x-y\| ~~~~ \text{ for all } \, x,y \in \R^n,
  \end{align*}
  or equivalently, $\|\nabla^2 f(x)\|_\text{op} \le \Lg$, which means $-LI \preceq \nabla^2 f(x) \preceq LI$.
  
  \item The Hessian $\nabla^2 f$ is $\Lh$-Lipschitz in the operator norm:
  \begin{align*}
  \|\nabla^2 f(x) - \nabla^2 f(y)\|_\op \le \Lh \|x-y\| ~~~~ \text{ for all } \, x,y \in \R^n.
  \end{align*}
  This implies $\|\nabla_i \nabla^2 f(x)\|_\op \le \Lh$ for $i = 1,\dots,n$, where $\nabla_i \nabla^2 f(x)$ is the matrix with $(j,k)$ entry $\part{}{x_i} (\nabla^2 f(x))_{jk} = \part{^3 f(x)}{x_i \, \partial x_j \, \partial x_k}$.
\end{enumerate}

\subsection{Main result: Convergence of relative entropy along PLA under LSI}
\label{Sec:PLALSI}

Our first main result is the following convergence guarantee of relative entropy along PLA when the target distribution $\nu$ satisfies LSI and a smoothness assumption.
We note that smoothness is only used in the analysis and not for the definition of PLA.

Here $\rho_k$ is the distribution of $x_k$ along PLA.
We provide the proof of Theorem~\ref{Thm:Main} in Section~\ref{Sec:ThmMainProof}.

\ThmMain

As $k \to \infty$, this implies the bias of PLA is $H_\nu(\nu_\epsilon) \le \frac{34 \, \epsilon^2 n(\Lg^3 + 9n^2 \Lh^2)}{\alpha} = O(\epsilon^2)$.
This bias is of the right order, since for Gaussian we have $H_\nu(\nu_\epsilon) = \Theta(\epsilon^2)$.
This is smaller than the bias for ULA from~\cite{VW19}, and thus yields a faster iteration complexity for PLA.

Concretely, given $\delta > 0$, to reach error $H_\nu(\rho_k) \le \delta$, it suffices to run PLA such that the two terms in~\eqref{Eq:MainBound} are each less than $\frac{\delta}{2}$:
So we want to run PLA with step size $\epsilon \le \sqrt{\frac{\alpha \delta}{68 \, n (\Lg^3 + 9n^2 \Lh^2)}}$ for $k \ge \frac{1}{\alpha \epsilon} \log \left(\frac{2 H_\nu(\rho_0)}{\delta}\right)$ iterations.
If we start with a Gaussian $\rho_0 = \N(x^\ast,\frac{1}{L} I)$ where $x^\ast$ is a stationary point ($\nabla f(x^\ast) = 0$, which we can find via gradient descent), then $H_\nu(\rho_0) \le f(x^\ast) + \frac{n}{2} \log \frac{L}{2\pi} = \tilde O(n)$, see~\cite[Lemma~1]{VW19}.
Therefore, Theorem~\ref{Thm:Main} implies the following iteration complexity for PLA.

\begin{corollary}\label{Cor:PLA}
Assume $\nu$ satisfies $\alpha$-LSI and is $(\Lg,\Lh)$-smooth.
To reach $H_\nu(\rho_k) \le \delta$, it suffices to run PLA with $\rho_0 = \N(x^\ast,\frac{1}{L} I)$ and step size $\epsilon  = \Theta\left(\sqrt{\frac{\alpha \delta}{n(\Lg^3+n^2\Lh^2)}}\right)$ for
\begin{align}
k = \tilde O \left( \frac{1}{\alpha \epsilon} \right)
= \tilde O \left( \frac{n^{\frac{1}{2}}(\Lg^{\frac{3}{2}} + n\Lh)}{\alpha^{\frac{3}{2}} \delta^{\frac{1}{2}}} \right)
\end{align}
iterations.
\end{corollary}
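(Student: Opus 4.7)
The plan is to derive the corollary as a direct consequence of Theorem~\ref{Thm:Main} by balancing the two terms on the right-hand side of~\eqref{Eq:MainBound} and bounding the initial entropy for the prescribed Gaussian initialization. I first split the target accuracy $\delta$ into two halves and require each of the two contributions in~\eqref{Eq:MainBound} to be at most $\delta/2$. For the bias term, I would solve $\frac{34\,\epsilon^2 n(\Lg^3 + 9n^2 \Lh^2)}{\alpha} \le \frac{\delta}{2}$ for $\epsilon$, which gives $\epsilon \le \sqrt{\alpha \delta / (68\, n(\Lg^3 + 9n^2 \Lh^2))}$; up to constants this matches the claimed $\epsilon = \Theta\!\left(\sqrt{\alpha \delta / (n(\Lg^3 + n^2\Lh^2))}\right)$. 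For the transient term, I require $e^{-\alpha \epsilon k} H_\nu(\rho_0) \le \delta/2$, which gives the condition $k \ge \frac{1}{\alpha \epsilon} \log(2 H_\nu(\rho_0)/\delta)$.

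The second ingredient is an estimate of $H_\nu(\rho_0)$ for $\rho_0 = \N(x^\ast, \tfrac{1}{L}I)$. Here I would cite \cite[Lemma~1]{VW19}, which under $\Lg$-smoothness of $f$ and with $x^\ast$ a stationary point yields $H_\nu(\rho_0) \le f(x^\ast) + \frac{n}{2}\log\frac{L}{2\pi}$; since $f(x^\ast)$ contributes a term absorbed into $\tilde O(n)$ and the logarithmic term is also $\tilde O(n)$, we get $\log(H_\nu(\rho_0)/\delta) = \tilde O(1)$ under the standard convention that $\tilde O$ hides polylogarithmic factors in $n, \Lg, \Lh, \alpha^{-1}, \delta^{-1}$. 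Substituting the chosen $\epsilon$ into $k \ge \frac{1}{\alpha \epsilon}\log(2 H_\nu(\rho_0)/\delta)$ then gives
\begin{align*}
k = \tilde O\!\left(\frac{1}{\alpha}\sqrt{\frac{n(\Lg^3 + n^2 \Lh^2)}{\alpha \delta}}\right)
= \tilde O\!\left(\frac{n^{1/2}(\Lg^{3/2} + n\Lh)}{\alpha^{3/2}\delta^{1/2}}\right),
\end{align*}
where in the last step I use $\sqrt{a+b} \le \sqrt{a} + \sqrt{b}$ and $\sqrt{\Lg^3} = \Lg^{3/2}$, $\sqrt{n^2 \Lh^2} = n\Lh$.

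The only subtlety to check is that the chosen $\epsilon$ actually satisfies the hypothesis $\epsilon \le \min\{\tfrac{1}{8\Lg},\tfrac{1}{\Lh},\tfrac{3\alpha}{32\Lg^2}\}$ of Theorem~\ref{Thm:Main}. For sufficiently small target accuracy $\delta$, the bias constraint $\epsilon = \Theta(\sqrt{\alpha\delta/\cdots})$ is strictly tighter than these absolute smoothness-based caps; for larger $\delta$, one simply takes $\epsilon$ equal to the minimum of the bias-driven value and the three caps, which only strengthens the transient bound and contributes at most additive $\log$ terms absorbed in $\tilde O$. I expect this parameter-fitting to be the main (though purely mechanical) obstacle: verifying that clipping $\epsilon$ at the smoothness thresholds does not worsen the stated rate, and confirming that all the hidden factors in $\tilde O$ are indeed polylogarithmic in the problem parameters. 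Once these checks are made, the corollary follows by assembling the two conditions on $\epsilon$ and $k$.
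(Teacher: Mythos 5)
Your proposal is correct and follows essentially the same route as the paper: split $\delta$ between the transient and bias terms of~\eqref{Eq:MainBound}, solve for $\epsilon$ and $k$, and invoke \cite[Lemma~1]{VW19} to get $H_\nu(\rho_0) \le f(x^\ast) + \frac{n}{2}\log\frac{L}{2\pi} = \tilde O(n)$ so that the logarithm is absorbed into $\tilde O$. Your additional check that the bias-driven $\epsilon$ is compatible with the step-size caps of Theorem~\ref{Thm:Main} is a detail the paper leaves implicit, and you resolve it correctly.
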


This matches the best known rate (in terms of $\delta$) for sampling under LSI, achieved by the underdamped Langevin algorithm~\cite{Ma19}, but PLA has better dependence on the LSI constant $\alpha$.

Furthermore, since LSI implies Talagrand's inequality ($H_\nu(\rho) \ge \frac{\alpha}{2} W_2(\rho,\nu)^2$) with the same constant~\cite{OV00},
Theorem~\ref{Thm:Main} also implies the iteration complexity for PLA to reach $W_2(\rho_k,\nu) \le \delta$ under $\alpha$-LSI and $(\Lg,\Lh)$-smoothness is
\begin{align}
k = \tilde O \left( \frac{n^{\frac{1}{2}}(\Lg^{\frac{3}{2}} + n\Lh)}{\alpha^2 \delta} \right).
\end{align}

\subsection{Analysis of relative entropy in one step of PLA}

The proof of Theorem~\ref{Thm:Main} relies on the following result which says relative entropy decreases by a constant factor with an additional $O(\epsilon^3)$ error term in each step of PLA; this leads to $O(\epsilon^2)$ bias for PLA as stated in Theorem~\ref{Thm:Main}.
In contrast, recall the analogous result for ULA~\cite[Lemma~3]{VW19} has $O(\epsilon^2)$ error in each iteration, which leads to $O(\epsilon)$ bias for ULA.

In the following, $\rho_k$ is the probability distribution of the iterates $x_k$ of PLA.
We provide the proof of Lemma~\ref{Lem:OneStep} in Section~\ref{Sec:ProofLemOneStep}.

\begin{lemma}\label{Lem:OneStep}
Assume $\nu$ satisfies $\alpha$-LSI and is $(\Lg,\Lh)$-smooth, and $0 < \epsilon \le \min\{ \frac{1}{8\Lg}, \frac{1}{\Lh}, \frac{3\alpha}{32 \Lg^2} \}$.
In each step of PLA, we have
\begin{align}\label{Eq:OneStep}
H_\nu(\rho_{k+1}) \le e^{-\alpha \epsilon} H_\nu(\rho_k) + 32 \epsilon^3 n(\Lg^3 + 9 n^2 \Lh^2).
\end{align}
\end{lemma}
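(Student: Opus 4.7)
My plan is to prove Lemma~\ref{Lem:OneStep} by coupling one step of PLA to a continuous-time interpolation, tracking the relative entropy along it, and bounding the discretization error tightly using the implicit (proximal) structure together with the Hessian-Lipschitz assumption. Compared with the ULA analysis of~\cite{VW19}, this should gain one extra order of $\epsilon$ in the per-step error. For $t \in [0,\epsilon]$, I set $X_t = x_k - t\,\nabla f(X_\epsilon) + \sqrt{2}\,B_t$, where $B_t$ is a standard Brownian motion with $B_\epsilon = \sqrt{\epsilon}\,z_k$ and $X_\epsilon$ is the (unique, since $\epsilon\Lg < 1$) solution of the implicit PLA equation at $t = \epsilon$. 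Then $X_0 = x_k$ and $X_\epsilon = x_{k+1}$, and the law $\rho_t$ of $X_t$ obeys a Fokker--Planck equation with effective drift $b_t(x) = -\E[\nabla f(X_\epsilon)\mid X_t=x]$, obtained by $L^2$-projecting the non-adapted drift onto $X_t$.

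Writing $b_t = -\nabla f + w_t$ with $w_t(x) = \nabla f(x) - \E[\nabla f(X_\epsilon)\mid X_t=x]$, a standard relative-entropy computation gives
\[
\tfrac{d}{dt} H_\nu(\rho_t) = -J_\nu(\rho_t) + \int \rho_t\,\nabla\log(\rho_t/\nu)\cdot w_t\,dx \le -\alpha\, H_\nu(\rho_t) + \tfrac{1}{2}\E\|w_t(X_t)\|^2,
\]
by Young's inequality and $\alpha$-LSI. Gr\"onwall over $[0,\epsilon]$ then reduces the lemma to the uniform estimate $\sup_{t \in [0,\epsilon]} \E\|w_t(X_t)\|^2 \lesssim \epsilon^2\, n\,(\Lg^3 + n^2 \Lh^2)$.

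The main obstacle is establishing this tight uniform bound; the naive Lipschitz estimate $\|\nabla f(X_\epsilon) - \nabla f(X_t)\| \le \Lg\|X_\epsilon - X_t\|$ combined with $\E\|X_\epsilon - X_t\|^2 = O(\epsilon n)$ gives only $O(\epsilon n \Lg^2)$, matching the ULA rate. To save one order of $\epsilon$, I Taylor-expand to second order,
\[
\nabla f(X_\epsilon) - \nabla f(X_t) = \nabla^2 f(X_t)(X_\epsilon - X_t) + R_t,\qquad \|R_t\|\le \tfrac{\Lh}{2}\|X_\epsilon - X_t\|^2,
\]
and substitute the implicit identity $X_\epsilon - X_t = -(\epsilon-t)\nabla f(X_\epsilon) + \sqrt{2}(B_\epsilon - B_t)$. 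The quadratic remainder is easy to control: $\E\|R_t\|^2 = O(\Lh^2 (\epsilon n)^2) = O(\epsilon^2 n^2 \Lh^2)$. The deterministic drift piece contributes $O(\epsilon^2 \Lg^2\,\E\|\nabla f(X_\epsilon)\|^2) = O(\epsilon^2 n\Lg^3)$ via the standard bound $\E\|\nabla f\|^2 = O(n\Lg)$ that follows from LSI plus Lipschitzness. The subtle contribution is the Brownian piece $\sqrt{2}\,\nabla^2 f(X_t)(B_\epsilon - B_t)$: because $X_t$ depends on $B_\epsilon$ through $X_\epsilon$, the conditional mean $\E[B_\epsilon - B_t \mid X_t]$ is nonzero, and a direct bound only reproduces the ULA rate. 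This is where the \emph{weighted Langevin dynamics} comparison promised in the introduction enters: comparing $X_t$ to a Langevin flow reweighted by the Jacobian $(I+\epsilon\nabla^2 f)^{-1}$ of the proximal map should show that the leading-order Brownian fluctuation cancels upon conditioning, leaving an $O(\epsilon^2 n \Lg^3)$ residual of the same order as the drift piece.

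The three step-size restrictions play complementary roles in closing the argument: $\epsilon \le 1/(8\Lg)$ makes the implicit PLA map a strict contraction and keeps second moments of $X_t$ and $X_\epsilon$ controllable; $\epsilon \le 1/\Lh$ keeps the Taylor remainder $R_t$ subdominant to the linear Hessian piece; and $\epsilon \le 3\alpha/(32\Lg^2)$ provides the slack needed to absorb the cross terms that appear when moment bounds under $\nu$ are transferred to moment bounds under $\rho_t$ along the interpolation, using the $-\alpha H_\nu$ dissipation supplied by LSI. Combining the Gr\"onwall integral with the uniform $O(\epsilon^2 n(\Lg^3 + n^2 \Lh^2))$ estimate then yields exactly~\eqref{Eq:OneStep}.
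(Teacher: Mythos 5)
There is a genuine gap, and it sits exactly where you flag it. Your interpolation freezes the gradient at the endpoint, $X_t = x_k - t\,\nabla f(X_\epsilon) + \sqrt{2}B_t$, which makes the drift non-adapted and forces you to work with the projected drift $-\E[\nabla f(X_\epsilon)\mid X_t]$. You correctly observe that the term $\sqrt{2}\,\nabla^2 f(X_t)\,\E[B_\epsilon - B_t\mid X_t]$ is the bottleneck (a direct bound gives $\E\|\E[B_\epsilon-B_t\mid X_t]\|^2 \le (\epsilon-t)n$ and hence only the $O(\epsilon)$ ULA-type per-step error), but you then assert that a comparison with a weighted Langevin flow ``should show that the leading-order Brownian fluctuation cancels.'' That cancellation is the entire content of the improvement from $O(\epsilon^2)$ to $O(\epsilon^3)$ per step, and it is neither proved nor reduced to a checkable claim; as written, the argument only re-derives the ULA rate.

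The paper sidesteps this difficulty by choosing a different interpolation: the implicit equation is imposed at every intermediate time, $X_t = X_0 - t\,\nabla f(X_t) + \sqrt{2}W_t$, so that $X_t = (I + t\,\nabla f)^{-1}(X_0 + \sqrt{2}W_t)$ is a deterministic proximal image of an adapted process. It\^o's lemma then gives an exact Markovian SDE $dX_t = \mu\,dt + \sqrt{2G}\,dW_t$ with $G = (I+t\nabla^2 f(X_t))^{-2}$ (Lemma~\ref{Lem:SDERep}); no conditional expectations appear. The deviation $\tilde\mu = \mu - \nabla\cdot G + G\nabla f$ from the ideal drift of the weighted Langevin dynamics is a pointwise function of $(X_t,t)$ that is manifestly $O(t)$ --- $\|\tilde\mu\| \le \frac{4}{3}tL\|\nabla f\| + 6tn^{3/2}M$ (Lemma~\ref{Lem:Bound}) --- so $\E\|\tilde\mu\|^2 = O(t^2)$ falls out of the smoothness bounds plus $\E_\rho\|\nabla f\|^2 \le \frac{4L^2}{\alpha}H_\nu(\rho) + 2nL$, with no cancellation argument needed. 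If you want to salvage your endpoint-frozen interpolation you would need a quantitative bound of the form $\E\|\E[B_\epsilon - B_t \mid X_t]\|^2 = O(t^2 L^2 (\epsilon-t) n)$, e.g.\ via a Gaussian integration-by-parts argument exploiting that $X_t$ depends on $B_\epsilon$ only through the $O(tL)$-Lipschitz map $B_\epsilon \mapsto -t\nabla f(X_\epsilon)$; that is plausible but is a substantive missing lemma, not a routine step. (Your reading of the three step-size restrictions is also slightly off: in the paper $\epsilon \le \frac{1}{8L}$ yields $\frac{3}{4}I \preceq G \preceq \frac{4}{3}I$, $\epsilon \le \frac{1}{M}$ enters the eigenvalue estimates for $\nabla\cdot G$, and $\epsilon \le \frac{3\alpha}{32L^2}$ absorbs the $H_\nu(\rho)$-proportional part of $\E\|\tilde\mu\|^2$ into the LSI dissipation.)
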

\begin{proof}[Proof sketch]
The output $x_{k+1}$ of PLA~\eqref{Eq:PLA} is the value at time $t = \epsilon$ of the stochastic process
\begin{align}\label{Eq:Stoch10}
X_t = X_0 - t \nabla f(X_t) + \sqrt{2} W_t
\end{align}
starting at $X_0 = x_k$, where $W_t$ is the standard Brownian motion in $\R^n$.
We show in Lemma~\ref{Lem:SDERep} that $(X_t)_{t \ge 0}$~\eqref{Eq:Stoch10} evolves following the SDE
\begin{align}\label{Eq:SDE0}
dX_t = \mu \, dt + \sqrt{2 G} \, dW_t
\end{align}
where $\mu = -\sqrt{G} \left( \nabla f(X_t) + t \, \Tr(\nabla^3 f(X_t) G)\right)$ and $G = (I + t \nabla^2 f(X_t))^{-2}$.
Recall the Langevin dynamics with covariance $G$ converges to $\nu = e^{-f}$ if the drift is $\nabla \cdot G - G \nabla f$ (see Section~\ref{Sec:Langevin} for a review).
The difference between~\eqref{Eq:SDE0} and the ideal drift is $\tilde \mu = \mu - \nabla \cdot G + G \, \nabla f(X_t)$.
In Lemma~\ref{Lem:Bound} we show that 
$\frac{3}{4} I \, \preceq \, G \, \preceq \, \frac{4}{3} I$ and 
$\|\tilde \mu\| \le \frac{4}{3} t\Lg \|\nabla f(X_t)\| +  6 tn^{\frac{3}{2}} \Lh$.
Using these bounds and the LSI assumption, we can show the time derivative of relative entropy along~\eqref{Eq:SDE0} is bounded by
\begin{align}\label{Eq:DiffIneq0}
\frac{d}{dt} H_\nu(\rho_t) \le -\alpha H_\nu(\rho_t) + 16 t^2 n\left(\Lg^3 + 9 n^2 \Lh^2\right).
\end{align}
Integrating~\eqref{Eq:DiffIneq0} for $0 \le t \le \epsilon$ yields the desired bound~\eqref{Eq:OneStep}.
See Section~\ref{Sec:ProofLemOneStep} for a full proof.
\end{proof}

\subsection{Proof of Theorem~\ref{Thm:Main}}
\label{Sec:ThmMainProof}

\begin{proof}[Proof of Theorem~\ref{Thm:Main}]
By iterating the bound from Lemma~\ref{Lem:OneStep}, we have
\begin{align*}
H_\nu(\rho_k) 
&\le e^{-\alpha \epsilon k} H_\nu(\rho_0) + \frac{32 \, \epsilon^3 n (\Lg^3 + 9n^2 \Lh^2)}{1-e^{-\alpha \epsilon}} \\
&\le e^{-\alpha \epsilon k} H_\nu(\rho_0) + \frac{34 \, \epsilon^2 n (\Lg^3 + 9n^2 \Lh^2)}{\alpha}
\end{align*}
where in the last step we use $1-e^{-c} \ge \frac{16}{17} c$ for $0 < c = \alpha \epsilon \le \frac{3}{32}$, which holds because $\epsilon \le \frac{3}{32} \frac{\alpha}{L^2} \le \frac{3}{32 \alpha}$ by assumption.
\end{proof}

\section{Convergence in R\'enyi divergence}
\label{Sec:ResultRenyi}

Before stating our next result, we review the definition and some properties of R\'enyi divergence.

\subsection{R\'enyi divergence}

The {\em R\'enyi divergence} of order $q > 0$, $q \neq 1$, of a probability distribution $\rho$ with respect to $\nu$ is
\begin{align}\label{Eq:RenyiDef}
R_{q,\nu}(\rho) = \frac{1}{q-1} \log \int_{\R^n} \frac{\rho(x)^q}{\nu(x)^{q-1}} dx. %F_{q,\nu}(\rho)
\end{align}
As $q \to 1$, R\'enyi divergence recovers the relative entropy (KL divergence): $\lim_{q \to 1} R_{q,\nu}(\rho) = H_\nu(\rho)$.
R\'enyi divergence satisfies $R_{q,\nu}(\rho) \ge 0$ for all $\rho$, and $R_{q,\nu}(\rho) = 0$ if and only if $\rho = \nu$.
Furthermore, $q \mapsto R_{q,\nu}(\rho)$ is increasing. 
Therefore, R\'enyi divergence of order $q > 1$ is a family of stronger generalizations of KL divergence.
R\'enyi divergence has fundamental applications in statistics, physics, and computer science~\cite{R61,DR16,ACGM16,M17,C95,MPV00,VH14,BCG19}.
We recall R\'enyi divergence converges exponentially fast along the Langevin dynamics under LSI; see Section~\ref{Sec:Langevin}.

Convergence guarantee of R\'enyi divergence for sampling in discrete time was first studied in~\cite{VW19}, who show that R\'enyi divergence converges along ULA to its biased limit $\nu_\epsilon^{\text{ULA}}$ at the same rate as along the Langevin dynamics when $\nu_\epsilon^{\text{ULA}}$ itself satisfies either LSI or Poincar\'e inequality.
We will show a similar convergence guarantee for PLA in Section~\ref{Sec:PLARenyi}.
Thus, the iteration complexity is dominated by the bias $R_{q,\nu}(\nu_\epsilon)$.
We recall the bias of ULA can be infinite for large enough $q$, even in the Gaussian case~\cite[Example~3]{VW19}.
On the other hand, the bias of PLA in the Gaussian case is always finite and smaller than the bias of ULA, as we show in the following example.

\begin{example}\label{Ex:RenyiGaussian} 
Let $\nu = \N(0,\frac{1}{\alpha} I)$.
For $\epsilon > 0$, the limit of PLA is $\nu_\epsilon^{\text{PLA}} = \N(0,\frac{1}{\alpha(1+\frac{\epsilon\alpha}{2})} I)$,
and the bias is finite for all $q > 1$:
\begin{align*}
R_{q,\nu}(\nu_\epsilon^{\text{PLA}}) = \frac{n}{2(q-1)} \left(q\log\left(1+\frac{\epsilon\alpha}{2}\right)-\log\left(1+\frac{q\epsilon\alpha}{2}\right)\right).
\end{align*}
On the other hand, for $0 < \epsilon < \frac{2}{\alpha}$, the limit of ULA is $\nu_\epsilon^{\text{ULA}} = \N(0,\frac{1}{\alpha(1-\frac{\epsilon\alpha}{2})} I)$, and the bias is:
\begin{align*}
R_{q,\nu}(\nu_\epsilon^{\text{ULA}}) = 
\begin{cases}
\frac{n}{2(q-1)} \left(q\log\left(1-\frac{\epsilon\alpha}{2}\right)-\log\left(1-\frac{q\epsilon\alpha}{2}\right)\right)
~~~& \text{ for } 1 < q < \frac{2}{\epsilon\alpha}, \\
\infty & \text{ for } q \ge  \frac{2}{\epsilon\alpha}.
\end{cases}
\end{align*}
For $1 < q < \frac{2}{\epsilon\alpha}$, we have
$R_{q,\nu}(\nu_\epsilon^{\text{PLA}})  < R_{q,\nu}(\nu_\epsilon^{\text{ULA}})$.
\end{example}

\subsection{Convergence of R\'enyi divergence along PLA under LSI}
\label{Sec:PLARenyi}

Our second main result is the following convergence guarantee in R\'enyi divergence along PLA, assuming the biased limit $\nu_\epsilon$ satisfies LSI.
We provide the proof of Theorem~\ref{Thm:RenyiRate} in Section~\ref{App:ProofThmRenyiRate}.

\begin{theorem}\label{Thm:RenyiRate}
Assume $\nu_\epsilon$ satisfies LSI with constant $\beta > 0$,
$\nu$ is $(L,\infty)$-smooth, and $0 < \epsilon < \min\left\{\frac{1}{L}, \frac{1}{2\beta}\right\}$.
Let $q > 1$.
Then along PLA, for all $k \ge 0$,
\begin{align}
R_{q,\nu}(\rho_k) \le \left(\frac{q-\frac{1}{2}}{q-1}\right) R_{2q,\nu_\epsilon}(\rho_0) e^{-\frac{\beta \epsilon k}{2q}} + R_{2q-1,\nu}(\nu_\epsilon).
\end{align}
\end{theorem}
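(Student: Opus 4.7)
The plan is to reduce Theorem~\ref{Thm:RenyiRate} to two modular ingredients: a Cauchy--Schwarz-type decomposition for R\'enyi divergence that isolates the bias against $\nu$, and a one-step contraction of R\'enyi divergence along PLA with respect to its own stationary distribution $\nu_\epsilon$.

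\textbf{Step 1 (decomposition).} Writing the integrand $\rho_k^q/\nu^{q-1}$ as the product of $\rho_k^q/\nu_\epsilon^{(2q-1)/2}$ and $\nu_\epsilon^{(2q-1)/2}/\nu^{q-1}$ and applying Cauchy--Schwarz yields
\begin{equation*}
\int \frac{\rho_k^q}{\nu^{q-1}}\, dx \;\leq\; \Big(\int \frac{\rho_k^{2q}}{\nu_\epsilon^{2q-1}}\, dx\Big)^{\!1/2}\!\!\Big(\int \frac{\nu_\epsilon^{2q-1}}{\nu^{2q-2}}\, dx\Big)^{\!1/2}.
\end{equation*}
Taking logarithms and dividing by $q-1$ gives the weak triangle inequality
\begin{equation*}
R_{q,\nu}(\rho_k) \;\leq\; \frac{q - \tfrac{1}{2}}{q-1}\, R_{2q,\nu_\epsilon}(\rho_k) \;+\; R_{2q-1,\nu}(\nu_\epsilon),
\end{equation*}
which peels off exactly the bias term appearing on the right-hand side of the theorem.

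\textbf{Step 2 (one-step contraction to $\nu_\epsilon$).} I would then establish
\begin{equation*}
R_{2q,\nu_\epsilon}(\rho_{k+1}) \;\leq\; e^{-\beta\epsilon/(2q)}\, R_{2q,\nu_\epsilon}(\rho_k),
\end{equation*}
which iterates to $R_{2q,\nu_\epsilon}(\rho_k) \le e^{-\beta\epsilon k/(2q)} R_{2q,\nu_\epsilon}(\rho_0)$, and plugged into Step~1 gives the theorem. For this one-step inequality I would use the SDE representation of a single PLA iteration from Lemma~\ref{Lem:SDERep}, $dX_t = \mu\, dt + \sqrt{2G}\, dW_t$ for $t \in [0,\epsilon]$. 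Letting $\rho_t$ denote the law of $X_t$ and differentiating $R_{2q,\nu_\epsilon}(\rho_t)$ along the associated Fokker--Planck equation (the de~Bruijn-type identity already used for R\'enyi divergence in~\cite{VW19}) produces a dissipation term that is a weighted Fisher information of $\rho_t/\nu_\epsilon$. The $\beta$-LSI for $\nu_\epsilon$ converts this Fisher information into $R_{2q,\nu_\epsilon}(\rho_t)$ itself, giving $\frac{d}{dt} R_{2q,\nu_\epsilon}(\rho_t) \leq -\frac{\beta}{2q}\, R_{2q,\nu_\epsilon}(\rho_t)$; the $1/(2q)$ factor is the standard R\'enyi loss of order $2q$ under LSI. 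Integrating over $[0,\epsilon]$ yields the one-step contraction.

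\textbf{Main obstacle.} The delicate point is that $\nu_\epsilon$ is stationary only for the full PLA step of size $\epsilon$, not instantaneously along the interpolating SDE, so the dissipation inequality does not come for free. I would compare the SDE drift $\mu$ to the Langevin drift $\nabla \cdot G - G\nabla \log \nu_\epsilon$ that would make $\nu_\epsilon$ invariant with covariance $G$, and bound the discrepancy using the $(L,\infty)$-smoothness of $\nu$ together with the estimates $\tfrac{3}{4}I \preceq G \preceq \tfrac{4}{3}I$ and the bound on $\|\mu - \nabla\!\cdot\! G + G\,\nabla f\|$ that are already available from the analysis behind Lemma~\ref{Lem:OneStep}. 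The assumption $\epsilon < \min\{1/L, 1/(2\beta)\}$ then keeps this remainder small enough to be absorbed into the contraction rate $\beta/(2q)$, playing the same role as the step-size restriction that absorbs the drift-freezing error in the ULA R\'enyi analysis of~\cite{VW19}.
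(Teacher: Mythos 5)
Your Step 1 is exactly the paper's Lemma~\ref{Lem:RenyiDecomp}, and the target inequality of your Step 2 is exactly the paper's Lemma~\ref{Lem:RenyiRateLSI}; the overall architecture is right. The gap is in how you propose to prove the one-step contraction. You interpolate one PLA step by the SDE of Lemma~\ref{Lem:SDERep} while keeping the reference measure fixed at $\nu_\epsilon$, and you plan to control the mismatch between the drift $\mu$ and the drift $\nabla\cdot G + G\,\nabla\log\nu_\epsilon$ that would make $\nu_\epsilon$ instantaneously invariant. This mismatch involves $G\,(\nabla\log\nu_\epsilon + \nabla f)$, i.e.\ the difference between the scores of $\nu_\epsilon$ and $\nu$, which is not controlled by any of the stated hypotheses; and the bound on $\|\mu - \nabla\cdot G + G\,\nabla f\|$ from Lemma~\ref{Lem:Bound} that you invoke requires third-order smoothness ($\Lh < \infty$), whereas Theorem~\ref{Thm:RenyiRate} assumes only $(L,\infty)$-smoothness. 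Even granting such bounds, a drift discrepancy enters the time derivative of $R_{2q,\nu_\epsilon}(\rho_t)$ as an \emph{additive} error (as in the KL analysis of Lemma~\ref{Lem:OneStep}), not as a term proportional to $R_{2q,\nu_\epsilon}(\rho_t)$, so it cannot be absorbed into the contraction rate; it would accumulate into an extra additive bias beyond $R_{2q-1,\nu}(\nu_\epsilon)$, which the theorem does not have.

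The paper avoids this entirely by an exact algebraic decomposition of one PLA step: first convolve with $\N(0,2\epsilon I)$, then apply the deterministic bijection $T = (I+\epsilon\nabla f)^{-1}$ (a bijection since $\epsilon < 1/L$). The crucial point is that the reference measure is evolved \emph{simultaneously}: along the heat-flow stage one tracks $R_{q,\nu_{\epsilon,t}}(\rho_{k,t})$ with $\nu_{\epsilon,t} = \nu_\epsilon \ast \N(0,2tI)$, which still satisfies LSI with constant at least $\beta/2$ (Lemma~\ref{Lem:LSIGaussianConv}), so Lemmas~\ref{Lem:RenyiHeat} and~\ref{Lem:RenyiLSI} give clean exponential decay; and the bijection stage preserves R\'enyi divergence exactly (Lemma~\ref{Lem:RenyiBij}) because the stationarity relation $\nu_\epsilon = T_\#(\nu_\epsilon \ast \N(0,2\epsilon I))$ maps the convolved reference back to $\nu_\epsilon$. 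No drift comparison and no error term ever appears. To repair your argument, replace the fixed-reference SDE interpolation of Step 2 with this two-stage decomposition.
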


This result shows the iteration complexity for R\'enyi divergence along PLA depends on the bias.
For $\delta > 0$, let $h_q(\delta) = \sup \{ \epsilon > 0 \colon R_{2q-1,\nu}(\nu_\epsilon) \le \delta \}$, and assume $\delta$ is small so $h_q(\delta) < \min\{\frac{1}{L},\frac{1}{2\beta}\}$.
Theorem~\ref{Thm:RenyiRate} states to achieve $R_{q,\nu}(\rho_k) \le 2\delta$, it suffices to run PLA with step size $\epsilon = \Theta(h_q(\delta))$ for 
\begin{align}\label{Eq:RenyiIterLSI}
k = O\left(\frac{1}{\beta \epsilon} \log \frac{R_{2q,\nu_\epsilon}(\rho_0)}{\delta}\right)
\end{align}
iterations.
If we choose $\rho_0$ to be a proximal step from a Gaussian, then the initial R\'enyi divergence scales with $n$, as we show below.
Here $x^\ast$ is a stationary point for $f$ ($\nabla f(x^\ast) = 0$).

\begin{lemma}\label{Lem:InitRenyi}
Assume $\nu$ is $(L,\infty)$-smooth, and $0 < \epsilon < \frac{1}{L}$.
Let $\rho_0 = (I + \epsilon \nabla f)^{-1}_\#  \N(x^\ast,2\epsilon I)$ (concretely, $x_0 \sim \rho_0$ solves $x_0 + \epsilon \nabla f(x_0) = \tilde x_0$ where $\tilde x_0 \sim \N(x^\ast,2\epsilon I)$).
For all $q \ge 1$, $R_{q,\nu_\epsilon}(\rho_0) \le \tilde O(n)$.
\end{lemma}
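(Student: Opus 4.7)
My plan is to reduce the R\'enyi divergence to a comparison between two Gaussian-smoothed distributions via a diffeomorphism, then exploit the Hessian bound that Gaussian convolution automatically supplies. First, since $\epsilon < 1/L$ forces $I + \epsilon \nabla^2 f \succ 0$, the map $T := (I + \epsilon \nabla f)^{-1}$ is a $C^1$ diffeomorphism of $\R^n$; by construction $\rho_0 = T_\# \N(x^\ast, 2\epsilon I)$, and stationarity of $\nu_\epsilon$ under PLA is exactly the identity $\nu_\epsilon = T_\#(\nu_\epsilon * \varphi_{2\epsilon})$, where $\varphi_{2\epsilon}$ is the density of $\N(0, 2\epsilon I)$. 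R\'enyi divergence is invariant under diffeomorphisms (data processing is an equality in the bijective case), so
\[R_{q,\nu_\epsilon}(\rho_0) = R_q\bigl(\N(x^\ast, 2\epsilon I) \,\|\, \nu_\epsilon * \varphi_{2\epsilon}\bigr).\]

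Set $\mu := \N(x^\ast, 2\epsilon I)$ and $\pi := \nu_\epsilon * \varphi_{2\epsilon}$. Gaussian convolution forces $-\nabla^2 \log \pi \preceq (1/(2\epsilon))\, I$ globally, via Tweedie's identity $\nabla\log\pi(x) = (\E[Z \mid X = x] - x)/(2\epsilon)$ differentiated once. Integrating twice from $x^\ast$ yields
\[\pi(x) \ge \pi(x^\ast)\, \exp\!\Bigl(b^\top(x-x^\ast) - \tfrac{1}{4\epsilon}\|x-x^\ast\|^2\Bigr), \qquad b := \nabla\log\pi(x^\ast).\]
The crucial point is that this quadratic matches that of $\mu$ exactly, so in the ratio $\mu/\pi$ the Gaussian quadratics cancel and only a linear tilt in $x-x^\ast$ survives. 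Plugging into $\int \mu^q \pi^{1-q}\, dx$, the remaining integral is a Gaussian MGF at the affine drift $-(q-1)b$, which I can evaluate in closed form to obtain
\[R_{q,\nu_\epsilon}(\rho_0) \le -\tfrac{n}{2}\log(4\pi\epsilon) - \log\pi(x^\ast) + (q-1)\epsilon\|b\|^2.\]

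Finally, Jensen applied to $\pi(x^\ast) = \E_{Z \sim \nu_\epsilon}[\varphi_{2\epsilon}(x^\ast - Z)]$ gives $-\log\pi(x^\ast) \le (n/2)\log(4\pi\epsilon) + V/(4\epsilon)$ with $V := \E_{\nu_\epsilon}\|Z-x^\ast\|^2$, causing the first two terms to collapse to $V/(4\epsilon)$; Tweedie's formula plus Jensen on the conditional mean controls $\|b\|^2$ in terms of (a conditional version of) $V$ as well. The resulting bound is polynomial in $n, 1/\epsilon, L, q, V$, which matches $\tilde O(n)$ in the sense used in the paper once $V$ is controlled. The main obstacle I anticipate is bounding $V$ without imposing an LSI hypothesis on $\nu_\epsilon$: here I would exploit stationarity of $\nu_\epsilon$ under the PLA update---the first moment gives $\E_{\nu_\epsilon}\nabla f = 0$, and the second moment of the update $x_{k+1} = x_k - \epsilon\nabla f(x_{k+1}) + \sqrt{2\epsilon}\, z_k$ yields a Lyapunov recurrence for $\|x - x^\ast\|^2$ that, under the $L$-smoothness of $f$ and the $2\epsilon n$ variance of the Brownian increment, closes to give $V = O(n/L)$ uniformly in $\epsilon < 1/L$.
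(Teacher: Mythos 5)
Your reduction is the same as the paper's: push forward by the bijection $T=(I+\epsilon\nabla f)^{-1}$ so that the problem becomes bounding $R_q$ of $\N(x^\ast,2\epsilon I)$ against $\tilde\nu_\epsilon=\nu_\epsilon\ast\N(0,2\epsilon I)$, and then exploit the fact that Gaussian convolution makes this target $\frac{1}{2\epsilon}$-smooth. The paper at this point simply invokes \cite[Lemma~4]{VW19} (the R\'enyi analogue of the Gaussian warm-start bound against a smooth target), whereas you re-derive that computation by hand; your Hessian bound via Tweedie, the pointwise lower bound on $\pi$, the Gaussian-MGF evaluation, and the resulting inequality $R_q \le -\frac{n}{2}\log(4\pi\epsilon)-\log\pi(x^\ast)+(q-1)\epsilon\|b\|^2$ are all correct. (A small simplification: integrating your lower bound on $\pi$ over $\R^n$ gives $1\ge\pi(x^\ast)(4\pi\epsilon)^{n/2}e^{\epsilon\|b\|^2}$, hence $\epsilon\|b\|^2\le -\log\pi(x^\ast)-\frac{n}{2}\log(4\pi\epsilon)$, which disposes of the $b$ term without any conditional-moment argument.)

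The genuine gap is the final step, where you bound $-\log\pi(x^\ast)$ via Jensen by $\frac{n}{2}\log(4\pi\epsilon)+\frac{V}{4\epsilon}$ with $V=\E_{\nu_\epsilon}\|Z-x^\ast\|^2$ and claim that stationarity of $\nu_\epsilon$ under the PLA update closes a Lyapunov recurrence to give $V=O(n/L)$ from $L$-smoothness alone. It does not. Taking second moments of $x_{k+1}+\epsilon\nabla f(x_{k+1})=x_k+\sqrt{2\epsilon}z_k$ at stationarity yields only the identity $\E_{\nu_\epsilon}\langle x-x^\ast,\nabla f(x)\rangle+\frac{\epsilon}{2}\E_{\nu_\epsilon}\|\nabla f(x)\|^2=n$; to extract a bound on $\E\|x-x^\ast\|^2$ you would need a \emph{lower} bound on $\langle x-x^\ast,\nabla f(x)\rangle$ proportional to $\|x-x^\ast\|^2$, i.e.\ strong convexity or dissipativity, which is not assumed. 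Concretely, for $\nu=\N(0,\frac{1}{\alpha}I)$ with $0<\alpha\ll L$ the paper's own Gaussian example gives $V=\frac{n}{\alpha(1+\epsilon\alpha/2)}\approx n/\alpha$, which is arbitrarily larger than $n/L$; your final bound then reads $O(V/\epsilon)=O(n/(\alpha\epsilon))$, which is polynomially large in $1/(\alpha\epsilon)$ and blows up as $\epsilon\to 0$, whereas the true quantity $-\log\pi(x^\ast)-\frac{n}{2}\log(4\pi\epsilon)$ equals $\frac{n}{2}\log\frac{s^2}{2\epsilon}$ with $s^2=\frac{1}{\alpha(1+\epsilon\alpha/2)}+2\epsilon$ and is genuinely $\tilde O(n)$. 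So the Jensen detour through $V$ both loses polynomial factors and requires a moment bound that is false under the stated hypotheses. The fix is to not introduce $V$ at all: keep $-\log\tilde\nu_\epsilon(x^\ast)-\frac{n}{2}\log(4\pi\epsilon)$ as a single quantity and treat it exactly as $f(x^\ast)+\frac{n}{2}\log\frac{L}{2\pi}$ is treated in the initialization lemma of \cite{VW19}, i.e.\ absorb it into the $\tilde O(n)$.
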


Thus, Theorem~\ref{Thm:RenyiRate} yields an iteration complexity of 
\begin{align}\label{Eq:RenyiCompLSI}
k = \tilde O\left(\frac{1}{\beta h_q(\delta)}\right)
\end{align}
for PLA under LSI to reach $R_{q,\nu}(\rho_k) \le 2\delta$ with $\epsilon = \Theta(h_q(\delta))$.
For example, if $h_q(\delta) = \Omega(\delta)$, then the iteration complexity is $k = \tilde O(\frac{1}{\beta \delta})$. 
If $h_q(\delta) = \Omega(\sqrt{\delta})$, as in the Gaussian case (Example~\ref{Ex:RenyiGaussian}), then the iteration complexity is $k = \tilde O(\frac{1}{\beta \delta^{1/2}})$. 
However, in general we do not know how to control this bias.

\subsection{Convergence of R\'enyi divergence along PLA under Poincar\'e}
\label{Sec:PLARenyiPoincare}

We recall $\nu$ satisfies {\em Poincar\'e inequality} with a constant $\alpha > 0$ if for all smooth $g \colon \R^n \to \R$,
\begin{align*} 
\Var_\nu(g) \le \frac{1}{\alpha} \E_\nu[\|\nabla g\|^2]
\end{align*}
where $\Var_\nu(g) = \E_\nu[g^2] - \E_\nu[g]^2$ is the variance of $g$ under $\nu$.
Poincar\'e inequality is an isoperimetry condition which is weaker than LSI.
LSI implies Poincar\'e inequality with the same constant, and in fact Poincar\'e inequality is a linearization of LSI~\cite{R81,V03}.
Like LSI, Poincar\'e inequality is preserved under bounded perturbation and Lipschitz mapping.
However, Poincar\'e inequality is more general than LSI; for example, distributions satisfying LSI have sub-Gaussian tails, while distributions satisfying Poincar\'e inequality can have sub-exponential tails.
Whereas LSI is equivalent to a bound on the log-Cheeger constant, Poincar\'e inequality is equivalent to a bound on the Cheeger constant~\cite{L94}.
We recall when $\nu$ satisfies Poincar\'e inequality, R\'enyi divergence converges along the Langevin dynamics at a rate which is initially linear then exponential; see Section~\ref{Sec:Langevin} for a review.

Our third main result is the following convergence guarantee in R\'enyi divergence along PLA, assuming the biased limit $\nu_\epsilon$ satisfies Poincar\'e inequality.
We provide the proof of Theorem~\ref{Thm:RenyiRatePoincare} in Section~\ref{App:RenyiRatePoincare}.

\begin{theorem}\label{Thm:RenyiRatePoincare}
Assume $\nu_\epsilon$ satisfies Poincar\'e inequality with constant $\beta > 0$,
$\nu$ is $(L,\infty)$-smooth, and $0 < \epsilon < \min\left\{\frac{1}{L}, \frac{1}{2\beta}\right\}$.
Let $q > 1$.
Then along PLA, for $k \ge k_0 := \frac{2q}{\beta\epsilon}(R_{2q,\nu_\epsilon}(\rho_0)-1)$,
\begin{align}
R_{q,\nu}(\rho_k) \le \left(\frac{q-\frac{1}{2}}{q-1}\right) e^{-\frac{\beta \epsilon (k-k_0)}{2q}} + R_{2q-1,\nu_\epsilon}(\nu).
\end{align}
\end{theorem}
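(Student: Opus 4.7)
The plan is to mirror the proof of Theorem~\ref{Thm:RenyiRate}, swapping the LSI assumption on $\nu_\epsilon$ for Poincar\'e inequality, and tracking the resulting two-phase (linear-then-exponential) decay of R\'enyi divergence rather than a pure exponential decay.

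First, I would establish a one-step decrease estimate for the R\'enyi divergence $R_{2q,\nu_\epsilon}(\rho_k)$ along PLA. Using the SDE interpolation of PLA (from Lemma~\ref{Lem:SDERep} referenced in the proof sketch of Lemma~\ref{Lem:OneStep}) together with the fact that $\nu_\epsilon$ is stationary for the relevant process, this reduces to a Langevin-type differential inequality. When $\nu_\epsilon$ satisfies Poincar\'e with constant $\beta$, the exponentiated R\'enyi quantity $F_{2q} = e^{(2q-1)R_{2q,\nu_\epsilon}}$ (a $\chi$-type divergence) decays along the Langevin dynamics, which translates into
\begin{align*}
\frac{d}{dt} R_{2q,\nu_\epsilon}(\rho_t) \le -\frac{c\,\beta}{q}\bigl(1 - e^{-(2q-1)R_{2q,\nu_\epsilon}(\rho_t)}\bigr)
\end{align*}
for an absolute constant $c > 0$. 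Integrating this over one step of size $\epsilon$, and absorbing the discretization error using the $(L,\infty)$-smoothness and the step-size restriction $\epsilon < \min\{1/L, 1/(2\beta)\}$ (this is precisely what the condition is for), yields the discrete analog
\begin{align*}
R_{2q,\nu_\epsilon}(\rho_{k+1}) \le R_{2q,\nu_\epsilon}(\rho_k) - \frac{\beta\epsilon}{2q}\bigl(1 - e^{-(2q-1)R_{2q,\nu_\epsilon}(\rho_k)}\bigr).
\end{align*}

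Next, I would split the iterations into two phases using this one-step inequality. In the \emph{linear phase}, while $R_{2q,\nu_\epsilon}(\rho_k) \ge 1$, the factor $1 - e^{-(2q-1)R}$ is bounded below by a constant (e.g.\ $\tfrac12$), so each step decreases $R_{2q,\nu_\epsilon}$ by at least $\Omega(\beta\epsilon/q)$, and after at most $k_0 = \frac{2q}{\beta\epsilon}(R_{2q,\nu_\epsilon}(\rho_0) - 1)$ steps one reaches $R_{2q,\nu_\epsilon}(\rho_{k_0}) \le 1$. In the \emph{exponential phase}, once $R_{2q,\nu_\epsilon}(\rho_k) \le 1$, the bound $1 - e^{-(2q-1)R} \ge \frac{(2q-1)R}{2q}$ (or a similar linearization) turns the one-step inequality into a contraction of factor $e^{-\beta\epsilon/(2q)}$, giving $R_{2q,\nu_\epsilon}(\rho_k) \le e^{-\beta\epsilon(k-k_0)/(2q)}$ for $k \ge k_0$.

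Finally, I would apply the same R\'enyi weak-triangle decomposition that underlies the proof of Theorem~\ref{Thm:RenyiRate}, namely
\begin{align*}
R_{q,\nu}(\rho_k) \le \frac{q-\tfrac12}{q-1}\, R_{2q,\nu_\epsilon}(\rho_k) + R_{2q-1,\nu}(\nu_\epsilon),
\end{align*}
and substitute the exponential bound from the second phase to obtain the claimed inequality. The main obstacle is Step~1: establishing the one-step R\'enyi decrease for PLA under a Poincar\'e rather than LSI inequality on $\nu_\epsilon$. The LSI case enjoys a linear differential inequality that integrates cleanly, but Poincar\'e only gives the nonlinear inequality $\dot R \lesssim -(1-e^{-(2q-1)R})$; propagating this through the SDE interpolation of PLA while controlling the discretization error, and verifying that the nonlinearity survives one discrete step uniformly in the regime where $R_{2q,\nu_\epsilon}$ transitions from large to small, is the delicate part of the argument.
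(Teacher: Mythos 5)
Your overall architecture --- a one-step decrease of $R_{2q,\nu_\epsilon}$, a two-phase (linear-then-exponential) analysis with the threshold at $R=1$, and the weak-triangle decomposition of Lemma~\ref{Lem:RenyiDecomp} at the end --- matches the paper, and phases two and three of your argument are essentially correct. The gap is in Step~1, which you rightly flag as the delicate part, but the route you propose (SDE interpolation of Lemma~\ref{Lem:SDERep} plus ``absorbing the discretization error'') is not viable and is not what the paper does. First, $\nu_\epsilon$ is \emph{not} stationary for the interpolated SDE at intermediate times $t \in (0,\epsilon)$; it is only a fixed point of the full discrete-time map, so the premise ``$\nu_\epsilon$ is stationary for the relevant process'' is unavailable. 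Second, controlling a discretization error in R\'enyi divergence is a genuine obstruction: the KL argument of Lemma~\ref{Lem:OneStep} bounds $\E_\rho[\|\tilde \mu\|^2]$ and absorbs it into the Fisher information via Young's inequality and properties of $\nu$, and there is no analogous mechanism here to absorb a $(\rho/\nu_\epsilon)^q$-weighted error into the R\'enyi information $G_{q,\nu_\epsilon}$ under the stated assumptions. Third, the step-size conditions are not there to absorb discretization error: $\epsilon < \frac{1}{L}$ makes $x \mapsto x + \epsilon \nabla f(x)$ a bijection, and $\epsilon < \frac{1}{2\beta}$ keeps the Poincar\'e constant of $\nu_\epsilon \ast \N(0,2tI)$ at least $\frac{\beta}{2}$ for $0 \le t \le \epsilon$.

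The missing idea is that one step of PLA is \emph{exactly} the composition $\rho_{k+1} = T_\#(\rho_k \ast \N(0,2\epsilon I))$ with $T = (I+\epsilon \nabla f)^{-1}$, and that the biased limit is a fixed point of this composite: $\nu_\epsilon = T_\#(\nu_\epsilon \ast \N(0,2\epsilon I))$. One therefore evolves $\rho_k$ and $\nu_\epsilon$ \emph{simultaneously} along the heat flow; Lemma~\ref{Lem:RenyiHeat} combined with Lemma~\ref{Lem:RenyiPI} (after Lemma~\ref{Lem:PGaussianConv}) gives the clean differential inequality
\begin{align*}
\frac{d}{dt} R_{q,\nu_{\epsilon,t}}(\rho_{k,t}) \le -\frac{2\beta}{q}\left(1-e^{-R_{q,\nu_{\epsilon,t}}(\rho_{k,t})}\right)
\end{align*}
with no error term (note also that the correct nonlinearity is $1-e^{-R}$, not $1-e^{-(2q-1)R}$), and then the bijection $T$ preserves R\'enyi divergence exactly by Lemma~\ref{Lem:RenyiBij}. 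So the one-step decrease is exact rather than ``up to discretization error,'' and the delicate issue you worry about --- propagating the nonlinear Poincar\'e inequality through a discretization --- never arises.
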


This result shows the iteration complexity for R\'enyi divergence along PLA depends on the bias.
For $\delta > 0$, let $h_q(\delta) = \sup \{ \epsilon > 0 \colon R_{2q-1,\nu}(\nu_\epsilon) \le \delta \}$, and assume $\delta$ is small so $h_q(\delta) < \min\{\frac{1}{L},\frac{1}{2\beta}\}$.
Theorem~\ref{Thm:RenyiRatePoincare} states to achieve $R_{q,\nu}(\rho_k) \le 2\delta$, it suffices to run PLA with step size $\epsilon = \Theta(h_q(\delta))$ for 
\begin{align}\label{Eq:RenyiIterPI}
k = O\left(\frac{1}{\beta \epsilon} \left(R_{2q,\nu_\epsilon}(\rho_0) + \log \frac{1}{\delta}\right) \right)
\end{align}
iterations.
Note the dependence on $R_{2q,\nu_\epsilon}(\rho_0)$ is now linear, rather than logarithmic under LSI~\eqref{Eq:RenyiIterLSI}.
As in Lemma~\ref{Lem:InitRenyi}, if we choose $\rho_0$ to be a proximal step from a Gaussian, then $R_{2q,\nu_\epsilon}(\rho_0) \le \tilde O(n)$.
Thus, Theorem~\ref{Thm:RenyiRatePoincare} yields an iteration complexity of 
\begin{align}
k = \tilde O\left(\frac{n}{\beta h_q(\delta)}\right)
\end{align}
for PLA under Poincar\'e to reach $R_{q,\nu}(\rho_k) \le 2\delta$ with $\epsilon = \Theta(h_q(\delta))$.
This is a factor of $n$ larger than the complexity under LSI~\eqref{Eq:RenyiCompLSI}.

\section{A review on Langevin dynamics}
\label{Sec:Langevin}

\paragraph{Notation.}
For a matrix $A \in \R^{n \times n}$, let $\|A\| \equiv \|A\|_\op$ denote the operator norm and $\|A\|_{\HS}$ the Hilbert-Schmidt norm. 
If $A$ is symmetric with eigenvalues $\lambda_1,\dots,\lambda_n \in \R$, 
then $\|A\|_\op = \max_i |\lambda_i|$ and $\|A\|_{\HS} = (\sum_{i=1}^n \lambda_i^2)^{1/2}$.
Note that $\|A\|_{\HS} \le \sqrt{n} \|A\|_\op$.

For a differentiable function $\phi \colon \R^n \to \R$, let $\nabla \phi(x) = \left(\part{\phi(x)}{x_1}, \dots, \part{\phi(x)}{x_n}\right) \in \R^n$ denote the gradient vector, $\nabla^2 \phi(x) = \left(\part{^2 \phi(x)}{x_i \, \partial x_j}\right)_{ij} \in \R^{n \times n}$ the Hessian matrix, and $\nabla^3 \phi(x) = \left(\part{^3 \phi(x)}{x_i \, \partial x_j \, \partial x_k}\right)_{ijk} \in \R^{n \times n \times n}$ the tensor of third-order derivatives.
Let $\Delta \phi(x) = \Tr(\nabla^2 \phi(x)) = \sum_{i=1}^n \part{^2 \phi(x)}{x_i^2} \in \R$ denote the Laplacian of $\phi$.

Let $\nabla \cdot$ denote the divergence operator that acts on a vector field $v(x) = (v_1(x),\dots,v_n(x)) \in \R^n$ by $\nabla \cdot v(x) = \sum_{i=1}^n \part{v_i(x)}{x_i} \in \R$.
The divergence of gradient is the Laplacian: $\nabla \cdot (\nabla \phi) = \Delta \phi$.
We will use the integration by parts formula: $\int_{\R^n} \langle \nabla \phi(x), v(x) \rangle \, dx = -\int_{\R^n} \phi(x) \nabla \cdot v(x) \, dx$, where the boundary term is zero if $\phi, v$ have sufficiently fast decay at infinity.

For a matrix-valued function $G \colon \R^n \to \R^{n \times n}$, let $\Tr(\nabla^3 \phi(x) \, G(x)) \in \R^n$ denote the vector whose $i$-th component is $\Tr(\nabla_i \nabla^2 \phi(x) \, G(x))$ where $\nabla_i \nabla^2 \phi(x) = \left(\part{^3 \phi(x)}{x_i \, \partial x_j \, \partial x_k}\right)_{jk} \in \R^{n \times n}$.
Let $\nabla \cdot G(x) \in \R^n$ denote the vector whose $i$-th component is $\nabla \cdot G_i(x) \in \R$, where $G_i(x) \in \R^n$ is the $i$-th row of $G(x)$.
Let $\langle \nabla^2, G(x) \rangle = \nabla \cdot (\nabla \cdot G(x)) = \sum_{i,j=1}^n \part{^2 G_{ij}(x)}{x_i \, \partial x_j} \in \R$.

\subsection{Weighted Langevin dynamics}

Let $G \colon \R^n \to \R^{n \times n}$ be a differentiable matrix-valued function where $G(x) \succ 0$ is positive definite.
Recall the {\em weighted Langevin dynamics} for $\nu = e^{-f}$ with covariance $G$ is the SDE
\begin{align}\label{Eq:WLD}
dX_t = (\nabla \cdot G(X_t) - G(X_t) \nabla f(X_t)) \, dt + \sqrt{2G(X_t)} \, dW_t
\end{align}
where $(W_t)_{t \ge 0}$ is the standard Brownian motion on $\R^n$.
The drift term above is chosen to ensure $\nu = e^{-f}$ is a stationary measure for the weighted Langevin dynamics~\eqref{Eq:WLD}.
This is apparent from the following Fokker-Planck equation;
see also~\cite{MCF15,LWME19}.

\begin{lemma}\label{Lem:WLDFP}
If $X_t$ evolves following the weighted Langevin dynamics~\eqref{Eq:WLD}, then the density $\rho_t$ evolves following
\begin{align}\label{Eq:WLDFP}
\part{\rho_t}{t} = \nabla \cdot \left(\rho_t G \nabla \log \frac{\rho_t}{\nu}\right).
\end{align}
\end{lemma}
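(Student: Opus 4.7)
The proof amounts to applying the standard Itô Fokker--Planck equation to the SDE~\eqref{Eq:WLD} and algebraically rearranging, using the symmetry of $G$ and the identity $\nabla\log\nu = -\nabla f$. The plan is as follows.

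First, I would recall that for a general Itô SDE $dX_t = b(X_t)\,dt + \sigma(X_t)\,dW_t$ with diffusion matrix $a = \sigma\sigma^\top$, the density $\rho_t$ of $X_t$ satisfies
\begin{align*}
\part{\rho_t}{t} = -\nabla\cdot(\rho_t\, b) + \frac{1}{2}\sum_{i,j=1}^n \part{^2(a_{ij}\rho_t)}{x_i\,\partial x_j}.
\end{align*}
Applied to~\eqref{Eq:WLD} with $b = \nabla\cdot G - G\,\nabla f$ and $a = 2G$, this gives a starting expression that I will massage into the divergence form~\eqref{Eq:WLDFP}.

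Next I would simplify the second-order term. Since $G(x)\succ 0$ is symmetric, the product rule yields, for each $i$,
\begin{align*}
\sum_{j=1}^n \part{(G_{ij}\rho_t)}{x_j} = (G\,\nabla\rho_t)_i + \rho_t\,(\nabla\cdot G)_i,
\end{align*}
so that $\sum_{i,j}\partial_i\partial_j(G_{ij}\rho_t) = \nabla\cdot(G\,\nabla\rho_t) + \nabla\cdot(\rho_t\,\nabla\cdot G)$. Substituting back, the two $\nabla\cdot(\rho_t\,\nabla\cdot G)$ contributions cancel and we are left with
\begin{align*}
\part{\rho_t}{t} = \nabla\cdot(G\,\nabla\rho_t) + \nabla\cdot(\rho_t\,G\,\nabla f).
\end{align*}

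Finally, I would recognize this as the claimed form: using $\nabla\rho_t = \rho_t\nabla\log\rho_t$ and $\nabla f = -\nabla\log\nu$, we have $G\,\nabla\rho_t + \rho_t\,G\,\nabla f = \rho_t\,G\,\nabla\log(\rho_t/\nu)$, giving~\eqref{Eq:WLDFP}. There is no real obstacle here—the only place where one must be careful is the bookkeeping of indices in the second-order term and the use of symmetry of $G$; everything else follows by direct algebra and the product rule. I would also note in passing that this computation makes the invariance of $\nu$ manifest, since plugging in $\rho_t = \nu$ makes $\nabla\log(\rho_t/\nu) = 0$ and hence $\partial_t\rho_t = 0$.
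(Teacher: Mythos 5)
Your proposal is correct and follows essentially the same route as the paper: both start from the standard Fokker--Planck equation $\part{\rho_t}{t} = -\nabla\cdot(\rho_t b) + \langle \nabla^2, \rho_t G\rangle$ and use the product-rule identity $\nabla\cdot(\rho G) = G\nabla\rho + \rho\,\nabla\cdot G$ to match it with the divergence form $\nabla\cdot(\rho_t G \nabla\log\frac{\rho_t}{\nu})$, with the $\rho\,\nabla\cdot G$ terms cancelling. The only difference is direction of presentation (you derive the claimed form from the Fokker--Planck equation, the paper expands the claimed form and checks it equals the Fokker--Planck right-hand side), which is immaterial.
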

\begin{proof}
Recall for a general Langevin dynamics $dX_t = b(X_t) \, dt + \sqrt{2G(X_t)} \, dW_t$, the Fokker-Planck equation for the density $\rho_t$ of $X_t$ is (see for example~\cite{Mac92,WJL17}):
\begin{align}\label{Eq:FPWant}
\part{\rho}{t} = -\nabla \cdot (\rho b) + \langle \nabla^2, \rho G \rangle
\end{align}
where for simplicity we write $\rho$ in place of $\rho_t$.
For the drift $b = \nabla \cdot G - G \nabla f$ in~\eqref{Eq:WLDFP}, we have
\begin{align*}
\nabla \cdot \left(\rho G \nabla \log \frac{\rho}{\nu}\right)
&= \nabla \cdot (G \nabla \rho + \rho G \nabla f) \\
&= \nabla \cdot ( \nabla \cdot (\rho G) - \rho \nabla \cdot G + \rho G \nabla f) \\
&= \nabla \cdot ( \nabla \cdot (\rho G) - \rho b) \\
&= \langle \nabla^2, \rho G \rangle - \nabla \cdot( \rho b)
\end{align*}
which matches~\eqref{Eq:FPWant}, as desired.
\end{proof}

From~\eqref{Eq:WLDFP} it is clear that $\nu$ is a stationary measure for the weighted Langevin dynamics~\eqref{Eq:WLD}.
Furthermore, we can quantify how much the KL divergence with respect to $\nu$ decreases along~\eqref{Eq:WLD}.

We define the {\em (weighted) relative Fisher information} of $\rho$ with respect to $\nu$ to be
\begin{align}
J_{\nu,G}(\rho) = \int_{\R^n} \rho(x) \, \left\| \nabla \log \frac{\rho(x)}{\nu(x)} \right\|^2_{G(x)} \, dx.
\end{align}
Here $\|v\|^2_G := \langle v, Gv \rangle$ is the weighted norm of a vector $v \in \R^n$ by a positive definite matrix $G \succ 0$.
Then we have the following generalization of De Bruijn's identity.

\begin{lemma}
Along the weighted Langevin dynamics~\eqref{Eq:WLD},
\begin{align}\label{Eq:DeBruijn}
\frac{d}{dt} H_\nu(\rho_t) = -J_{\nu,G}(\rho_t).
\end{align}
\end{lemma}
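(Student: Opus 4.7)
The plan is to differentiate $H_\nu(\rho_t) = \int_{\R^n} \rho_t \log(\rho_t/\nu)\, dx$ in time, substitute the Fokker--Planck equation from the previous lemma, and integrate by parts to recognize the weighted Fisher information.

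First I would differentiate under the integral. Writing $u = \log(\rho_t/\nu)$, the product rule gives
\begin{align*}
\frac{d}{dt} H_\nu(\rho_t) = \int_{\R^n} \frac{\partial \rho_t}{\partial t} \log \frac{\rho_t}{\nu}\, dx + \int_{\R^n} \rho_t \cdot \frac{1}{\rho_t} \frac{\partial \rho_t}{\partial t}\, dx.
\end{align*}
The second integral equals $\frac{d}{dt} \int \rho_t\, dx = 0$ since $\rho_t$ is a probability density for all $t$, so only the first term survives. Next I would substitute the Fokker--Planck equation~\eqref{Eq:WLDFP} into that first term:
\begin{align*}
\frac{d}{dt} H_\nu(\rho_t) = \int_{\R^n} \log \frac{\rho_t}{\nu} \cdot \nabla \cdot \left(\rho_t \, G \nabla \log \frac{\rho_t}{\nu}\right) dx.
\end{align*}

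Integration by parts (treating $\rho_t G \nabla \log(\rho_t/\nu)$ as the vector field $v$ and $\log(\rho_t/\nu)$ as the scalar, with boundary terms vanishing under the standard decay-at-infinity assumption that accompanies the weighted Langevin dynamics) then yields
\begin{align*}
\frac{d}{dt} H_\nu(\rho_t) = -\int_{\R^n} \left\langle \nabla \log \frac{\rho_t}{\nu}, \, G(x) \nabla \log \frac{\rho_t}{\nu}\right\rangle \rho_t(x)\, dx = -\int_{\R^n} \rho_t(x) \left\|\nabla \log \frac{\rho_t}{\nu}\right\|_{G(x)}^2 dx,
\end{align*}
which is exactly $-J_{\nu,G}(\rho_t)$ by the definition of weighted relative Fisher information.

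The calculation itself is routine; the only subtle point is the justification of integration by parts and of differentiation under the integral sign. I would either invoke standard regularity assumptions on $\rho_t$, $G$, and $\nu$ (smoothness and sufficient decay so that $\rho_t G \nabla \log(\rho_t/\nu) \log(\rho_t/\nu)$ vanishes at infinity), or simply state this as a formal calculation in line with the rest of the paper's treatment of the Fokker--Planck equation. This is the only real obstacle, and it is standard in the Langevin literature cited.
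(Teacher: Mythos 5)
Your proposal is correct and follows the same route as the paper: differentiate under the integral (with the $\int \partial_t \rho_t\, dx = 0$ term dropped, which you make explicit), substitute the Fokker--Planck equation~\eqref{Eq:WLDFP}, and integrate by parts to obtain $-J_{\nu,G}(\rho_t)$. The paper treats the regularity and boundary issues formally, exactly as you suggest.
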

\begin{proof}
Using the Fokker-Planck equation~\eqref{Eq:WLDFP} and integration by parts,
\begin{align*}
\frac{d}{dt} H_\nu(\rho_t) &= \int_{\R^n} \part{\rho_t}{t} \, \log \frac{\rho_t}{\nu} \, dx \\
&= \int_{\R^n}  \nabla \cdot \left(\rho_t G \nabla \log \frac{\rho_t}{\nu}\right) \, \log \frac{\rho_t}{\nu} \, dx  \\
&= - \int_{\R^n}  \rho_t \left \langle G \nabla \log \frac{\rho_t}{\nu}, \nabla \log \frac{\rho_t}{\nu} \right\rangle\, dx  \\
&= -J_{\nu,G}(\rho_t).
\end{align*}
\end{proof}

\subsection{Unweighted Langevin dynamics}

The {\em (unweighted) Langevin dynamics} is when the covariance is the identity matrix ($G(x) = I$):
\begin{align}\label{Eq:ULD}
dX_t = - \nabla f(X_t) \, dt + \sqrt{2} \, dW_t
\end{align}
In this case the unweighted relative Fisher information is the usual one from~\eqref{Eq:J}: $J_{\nu,I}(\rho) = J_\nu(\rho)$.
Then~\eqref{Eq:DeBruijn} becomes the usual De Bruijn's identity: $\frac{d}{dt} H_\nu(\rho_t) = -J_\nu(\rho_t)$.
We see that under LSI~\eqref{Eq:LSI} we have $\frac{d}{dt} H_\nu(\rho_t) \le -2\alpha H_\nu(\rho_t)$, which implies KL divergence converges exponentially fast:
\begin{align*}
H_\nu(\rho_t) \le e^{-2\alpha t} H_\nu(\rho_0).
\end{align*}
We recall the interpretation of the Langevin dynamics~\eqref{Eq:ULD} as the gradient flow of KL divergence in the space of measures with the Wasserstein metric, with LSI as the gradient domination condition~\cite{JKO98,OV00}.

Under LSI, we can also show R\'enyi divergence of order $q \ge 1$ converges exponentially fast along the Langevin dynamics:
\begin{align*}
R_{q,\nu}(\rho_t) \le e^{-\frac{2\alpha}{q} t} R_{q,\nu}(\rho_0),
\end{align*}
see for example~\cite[Theorem~3]{VW19}.
We also recall the interpretation of the Langevin dynamics~\eqref{Eq:ULD} as the gradient flow of R\'enyi divergence in the space of measures with a suitably defined metric (which depends on $\nu$), with LSI as the gradient domination condition~\cite{CLL18}.

Under Poincar\'e inequality, we can show R\'enyi divergence of order $q \ge 2$ converges at a rate which is initially linear then exponential:
\begin{align*}
R_{q,\nu}(\rho_t) \le
\begin{cases}
R_{q,\nu}(\rho_0) -\frac{2\alpha t}{q} ~~ & \text{ if } R_{q,\nu}(\rho_0) \ge 1 \text{ and as long as } R_{q,\nu}(\rho_t) \ge 1, \\
e^{-\frac{2\alpha t}{q}} R_{q,\nu}(\rho_0) ~~ & \text{ if } R_{q,\nu}(\rho_0) \le 1,
\end{cases}
\end{align*}
see for example~\cite[Theorem~5]{VW19}.

\section{Proofs for Section~\ref{Sec:PLA}}
\label{Sec:ProofPLA}

\subsection{Proof of Lemma~\ref{Lem:OneStep}}
\label{Sec:ProofLemOneStep}

\begin{proof}[Proof of Lemma~\ref{Lem:OneStep}]
The output $x_{k+1}$ of PLA~\eqref{Eq:PLA} is the value at time $t = \epsilon$ of the stochastic process
\begin{align}\label{Eq:Stoch1q}
X_t = X_0 - t \nabla f(X_t) + \sqrt{2} W_t
\end{align}
starting at $X_0 = x_k$, where $W_t$ is the standard Brownian motion in $\R^n$.
By Lemma~\ref{Lem:SDERep}, $(X_t)_{t \ge 0}$~\eqref{Eq:Stoch1q} evolves following the SDE
\begin{align}\label{Eq:SDEq}
dX_t = \mu \, dt + \sqrt{2 G} \, dW_t
\end{align}
where $\mu = -\sqrt{G} \left( \nabla f(X_t) + t \, \Tr(\nabla^3 f(X_t) G)\right)$ and $G = (I + t \nabla^2 f(X_t))^{-2}$.
Recall the Langevin dynamics with covariance $G$ converges to $\nu = e^{-f}$ if the drift is $\nabla \cdot G - G \nabla f$ (see Section~\ref{Sec:Langevin}).
We write the SDE~\eqref{Eq:SDEq} as
\begin{align}\label{Eq:SDEnew}
dX_t = (\nabla \cdot G - G \nabla f(X_t) + \tilde \mu) \, dt + \sqrt{2G} \, dW_t
\end{align}
where $\tilde \mu$ is the shifted drift:
\begin{align*}
\tilde \mu &= \mu - \nabla \cdot G + G \, \nabla f(X_t) \\
&=  - t \nabla^2 f(X_t) G \, \nabla f(X_t) - t \sqrt{G} \, \Tr(\nabla^3 f(X_t) \, G) - \nabla \cdot G. 
\end{align*}
The Fokker-Planck equation of the SDE~\eqref{Eq:SDEnew} for one step of PLA is then
\begin{align*}
\part{\rho}{t} = \nabla \cdot \left( \rho G \nabla \log \frac{\rho}{\nu} \right) - \nabla \cdot (\rho \tilde \mu).
\end{align*}
The time derivative of KL divergence is, by integration by parts,
\begin{align*}
\frac{d}{dt} H_\nu(\rho) &= \int_{\R^n} \part{\rho}{t} \log \frac{\rho}{\nu} \, dx \\
&= \int_{\R^n}  \nabla \cdot \left( \rho G \nabla \log \frac{\rho}{\nu} \right) \log \frac{\rho}{\nu} \, dx - \int_{\R^n} \nabla \cdot (\rho \tilde \mu) \log \frac{\rho}{\nu} \, dx \\
&= -\E_\rho\left[\left\| \nabla \log \frac{\rho}{\nu} \right\|^2_G \right] + \E_\rho\left[ \left \langle \tilde \mu, \nabla \log \frac{\rho}{\nu}  \right \rangle \right].
\end{align*}
Since $G \succeq \frac{3}{4} I$ by Lemma~\ref{Lem:Bound}, and using $\langle a,b \rangle \le 2\|a\|^2 + \frac{1}{8} \|b\|^2$, we have 
\begin{align*}
\frac{d}{dt} H_\nu(\rho) 
&\le -\frac{3}{4} \E_\rho\left[\left\| \nabla \log \frac{\rho}{\nu} \right\|^2 \right] + 2 \E_\rho[\| \tilde \mu \|^2] + \frac{1}{8} \E_\rho\left[\left\| \nabla \log \frac{\rho}{\nu} \right\|^2 \right] \\
&= -\frac{5}{8} J_\nu(\rho) + 2 \E_\rho[\| \tilde \mu \|^2].
\end{align*}
Then by LSI $J_\nu(\rho) \ge 2\alpha H_\nu(\rho)$,
\begin{align}\label{Eq:Calc3}
\frac{d}{dt} H_\nu(\rho) 
\le -\frac{5\alpha}{4} H_\nu(\rho) + 2 \E_\rho[\| \tilde \mu \|^2].
\end{align}
By the bound~\eqref{Eq:Want2} in Lemma~\ref{Lem:Bound} and using $(a+b)^2 \le 2a^2 + 2b^2$, we have
\begin{align*}
\E_\rho[\| \tilde \mu \|^2] &\le \E_\rho\left[ 2\left(\frac{4}{3} t\Lg \|\nabla f\| \right)^2 +  2(6 tn^{\frac{3}{2}} \Lh)^2 \right] \\
&= \frac{32}{9} t^2 \Lg^2 \E_\rho[\|\nabla f\|^2] + 72 t^2 n^3 \Lh^2.
\end{align*}
We recall LSI implies Talagrand's inequality, which implies the following bound (see~\cite[Lemma~12]{VW19}):
\begin{align*}
\E_\rho[\|\nabla f\|^2]  \le \frac{4\Lg^2}{\alpha} H_\nu(\rho) + 2n\Lg.
\end{align*}
Then
\begin{align*}
\E_\rho[\| \tilde \mu \|^2]
&\le \frac{32}{9} t^2 \Lg^2 \left(\frac{4\Lg^2}{\alpha} H_\nu(\rho) +  2n\Lg \right) + 72 t^2 n^3 \Lh^2 \\
&= \frac{128t^2 L^4}{9\alpha} H_\nu(\rho) + \frac{64}{9} t^2 n \Lg^3 + 72 t^2 n^3 \Lh^2.
\end{align*}
Plugging this to~\eqref{Eq:Calc3}, we obtain
\begin{align}
\frac{d}{dt} H_\nu(\rho) 
&\le \left(-\frac{5\alpha}{4} + \frac{256 t^2 L^4}{9\alpha}\right)  H_\nu(\rho) + 16 t^2 n\left(\frac{8}{9} \Lg^3 + 9 n^2 \Lh^2\right) \notag \\
&\le -\alpha H_\nu(\rho) + 16 t^2 n\left(\Lg^3 + 9 n^2 \Lh^2\right) \label{Eq:DiffIneq}
\end{align}
where the last inequality above holds for $0 \le t \le \frac{3}{32} \frac{\alpha}{\Lg^2}$.

We wish to integrate the differential inequality~\eqref{Eq:DiffIneq} for $0 \le t \le \epsilon$.
First using $t \le \epsilon$, we have
\begin{align*}
\frac{d}{dt} H_\nu(\rho_t) 
&\le -\alpha H_\nu(\rho_t) + \epsilon^2 C
\end{align*}
where $C = 16 n(\Lg^3 + 9 n^2 \Lh^2)$.
Multiplying both sides by $e^{\alpha t}$, we can write the above as
\begin{align*}
\frac{d}{dt} \left(e^{\alpha t} H_\nu(\rho_t)\right) \le e^{\alpha t} \epsilon^2 C.
\end{align*}
Integrating from $t=0$ to $t=\epsilon$ gives
\begin{align*}
e^{\alpha \epsilon} H_\nu(\rho_\epsilon) - H_\nu(\rho_0) 
\;\le\; \left(\frac{e^{\alpha \epsilon} - 1}{\alpha}\right) \epsilon^2 C
\;\le\; 2\epsilon^3 C
\end{align*}
where in the last step we use $e^c \le 1+2c$ for $0 < c = \alpha \epsilon \le 1$, which holds because $0 < \epsilon \le \frac{3}{32} \frac{\alpha}{\Lg^2} < \frac{1}{\alpha}$.
Therefore, we obtain the bound
\begin{align*}
H_\nu(\rho_\epsilon) &\le e^{-\alpha \epsilon} H_\nu(\rho_0) + 2e^{-\alpha \epsilon} \epsilon^3 C \\
&\le e^{-\alpha \epsilon} H_\nu(\rho_0) + 2 \epsilon^3 C \\
&= e^{-\alpha \epsilon} H_\nu(\rho_0) + 32 \epsilon^3 n(\Lg^3 + 9 n^2 \Lh^2)
\end{align*}
as desired.
\end{proof}

\subsection{SDE representation of one step of PLA}

The output $x_{k+1}$ of PLA~\eqref{Eq:PLA} is the value at time $t = \epsilon$ of the stochastic process $(X_t)_{t \ge 0}$ given by
\begin{align}\label{Eq:Stoch1}
X_t = X_0 - t \nabla f(X_t) + \sqrt{2} W_t
\end{align}
starting at $X_0 = x_k$, where $W_t$ is the standard Brownian motion in $\R^n$.

\begin{lemma}\label{Lem:SDERep}
The stochastic process $(X_t)_{t \ge 0}$~\eqref{Eq:Stoch1} evolves following the SDE
\begin{align}\label{Eq:SDE}
dX_t = \mu(X_t,t) \, dt + \sqrt{2 \, G(X_t,t)} \, dW_t
\end{align}
where
\begin{align}
\mu(x,t) &= -(I + t \nabla^2 f(x))^{-1} \left( \nabla f(x) + t \, \Tr(\nabla^3 f(x) (I + t \nabla^2 f(x))^{-2})\right) \label{Eq:MuG1} \\
G(x,t) &= (I + t \nabla^2 f(x))^{-2}.  \label{Eq:MuG2}
\end{align}
\end{lemma}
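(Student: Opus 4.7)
The plan is to express $X_t$ as an explicit smooth function of a standard Brownian motion and then apply It\^o's formula. Set $Y_t := X_0 + \sqrt{2}\,W_t$, so that~\eqref{Eq:Stoch1} reads $F_t(X_t) = Y_t$ with $F_t(x) := x + t\,\nabla f(x)$. For $t$ small enough that $F_t$ is a global diffeomorphism of $\R^n$ — which is the regime of interest since the lemma is invoked with $t \le \epsilon < 1/\Lg$, making $F_t$ strongly monotone — the map $g(\cdot,t) := F_t^{-1}$ is smooth, and $X_t = g(Y_t, t)$. Because $Y_t$ has covariation $d\langle Y_j, Y_l\rangle_t = 2\delta_{jl}\,dt$, It\^o's formula gives
\[
dX_t \;=\; \bigl(\partial_t g(Y_t,t) + \Delta_y g(Y_t,t)\bigr)\,dt \;+\; \sqrt{2}\,\nabla_y g(Y_t,t)\,dW_t,
\]
where $(\Delta_y g)_k := \sum_i \partial_{y_i}^2 g_k$. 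The proof then reduces to computing these three derivatives via implicit differentiation of $F_t \circ g = \mathrm{id}$.

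The two first-order derivatives fall out immediately. Differentiating $g + t\,\nabla f(g) = y$ in $t$ and in $y$ yields $\partial_t g = -A\,\nabla f(g)$ and $\nabla_y g = A$, where $A := (I + t\,\nabla^2 f(g))^{-1}$, which is symmetric and positive definite in the regime above. This identifies the diffusion coefficient $\sqrt{2}\,A = \sqrt{2G}$ with $G = A^2$, matching~\eqref{Eq:MuG2}, and accounts for the $-A\,\nabla f$ contribution to the drift in~\eqref{Eq:MuG1}.

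The main obstacle is the It\^o correction $\Delta_y g$, which must produce the second piece $-t\,A\,\Tr(\nabla^3 f\,G)$ of~\eqref{Eq:MuG1}. My plan for this is to differentiate the matrix identity $A\,(I + t\,\nabla^2 f(g)) = I$ a second time in $y_i$, using the already-known $\partial_{y_i} g_l = A_{li}$, to obtain an expression of the form
\[
\partial_{y_i} A_{kj} \;=\; -t \sum_{a,b,l} A_{ka}\,(\nabla^3 f(g))_{lab}\,A_{li}\,A_{bj}.
\]
Setting $j = i$ and summing over $i$ collapses $\sum_i A_{li} A_{bi}$ into $G_{lb}$, and the full symmetry of $\nabla^3 f$ together with the paper's convention $[\Tr(\nabla^3 f\,G)]_a = \sum_{j,k}(\nabla^3 f)_{ajk} G_{jk}$ identifies the remaining contraction as $-t\,[A\,\Tr(\nabla^3 f\,G)]_k$. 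Adding this to $\partial_t g$ then reproduces $\mu$ exactly as stated in~\eqref{Eq:MuG1}. The only real work is the tensor bookkeeping in differentiating a matrix inverse through the chain rule; everything else is routine It\^o calculus plus the inverse function theorem.
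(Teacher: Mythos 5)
Your proposal is correct, and it arrives at the same coefficients, but it runs the It\^o computation in the opposite direction from the paper. The paper's proof \emph{assumes} $X_t$ is an It\^o process $dX_t = \mu\,dt + \sqrt{2G}\,dW_t$ with unknown $\mu, G$, applies It\^o's formula to the forward map $\tilde X_t = T_t(X_t)$ with $T_t(x) = x + t\nabla f(x)$, and matches the result against $d\tilde X_t = \sqrt{2}\,dW_t$; the unknowns then drop out of two algebraic equations, and the only derivatives needed are those of $T_t$ itself, namely $\nabla f$, $I + t\nabla^2 f$, and $t\nabla^3 f$. You instead write $X_t = g(Y_t,t)$ with $g(\cdot,t) = T_t^{-1}$ and apply It\^o's formula directly to $g$, which forces you to differentiate the matrix inverse $A = (I + t\nabla^2 f(g))^{-1}$ twice in $y$ and do the tensor contraction by hand. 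Your route costs more bookkeeping, but it buys something the paper's argument quietly skips: by exhibiting $X_t$ as an explicit smooth function of a Brownian motion, you \emph{establish} that $X_t$ is an It\^o diffusion, whereas the paper's coefficient-matching presupposes this. Your computation checks out: $\partial_t g = -A\nabla f(g)$ and $\nabla_y g = A$ follow from implicit differentiation of $g + t\nabla f(g) = y$; the identity $\partial_{y_i}A = -tA\,(\partial_{y_i}\nabla^2 f(g))\,A$ together with $\partial_{y_i}g_l = A_{li}$ gives exactly your formula for $\partial_{y_i}A_{kj}$; and setting $j=i$, summing, and using the symmetry of $A$ (so $\sum_i A_{li}A_{bi} = G_{lb}$) and of $\nabla^3 f$ yields $\Delta_y g = -tA\,\Tr(\nabla^3 f\,G)$, reproducing~\eqref{Eq:MuG1}. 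One small point worth making explicit if you write this up: the invertibility of $F_t$ should be justified (as you note, $t\Lg < 1$ suffices for $F_t$ to be a bijection with smooth inverse), a hypothesis the paper's own proof also uses implicitly when it divides by $I + t\nabla^2 f$.
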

\begin{proof}
For $t \ge 0$, let
\begin{align*}
\tilde X_t = X_t + t \nabla f(X_t)
\end{align*}
so $\tilde X_0 = X_0$, and we can write~\eqref{Eq:Stoch1} as
\begin{align*}
\tilde X_t = \tilde X_0 + \sqrt{2} W_t.
\end{align*}
That is, $(\tilde X_t)_{t \ge 0}$ evolves following the SDE
\begin{align}\label{Eq:SDE1}
d\tilde X_t = \sqrt{2} dW_t.
\end{align}
Suppose $(X_t)_{t \ge 0}$ evolves by
\begin{align*}
dX_t = \mu \, dt + \sqrt{2 \, G} \, dW_t
\end{align*}
for some $\mu \equiv \mu(X_t,t)$ and $G \equiv G(X_t,t) \succ 0$.
Let $T_t(x) = x + t \nabla f(x)$, so $\part{T_t}{t}(x) = \nabla f(x)$, $\nabla T_t(x) = I + t \nabla^2 f(x)$, and $\nabla^2 T_t(x) = t \nabla^3 f(x)$.
Then by It\^o's lemma for $\tilde X_t = T_t(X_t)$, we have
\begin{align}
d\tilde X_t &= dT_t(X_t) = \left(\part{T_t}{t}(X_t) + \nabla T_t(X_t)^\top \mu + \Tr\left(\nabla^2 T_t(X_t) \, G \right) \right) \, dt + \sqrt{2} \, \nabla T_t(X_t) \, \sqrt{G} \, dW_t  \notag \\
&=  \left(\nabla f(X_t) + (I + t \nabla^2 f(X_t)) \, \mu + t \, \Tr\left(\nabla^3 f(X_t) \, G \right) \right) \, dt + \sqrt{2} \, (I + t \nabla^2 f(X_t)) \, \sqrt{G} \, dW_t
\label{Eq:SDE2}
\end{align}
Matching~\eqref{Eq:SDE1} and~\eqref{Eq:SDE2}, we must have
\begin{align*}
\nabla f(x) + (I + t \nabla^2 f(x)) \, \mu + t \, \Tr\left(\nabla^3 f(x) \, G \right) &= 0 \\
(I + t \nabla^2 f(x)) \, \sqrt{G}  &= I
\end{align*}
which implies
\begin{align*}
\mu &= - (I + t \nabla^2 f(x))^{-1} \left(\nabla f(x) + t \, \Tr\left(\nabla^3 f(x) \, (I + t \nabla^2 f(x))^{-2} \right) \right)  \\
G &= (I + t \nabla^2 f(x))^{-2}
\end{align*}
as desired. 
\end{proof}

\subsection{Bounds under smoothness}

Recall $\mu, G$ are defined in~\eqref{Eq:MuG1},~\eqref{Eq:MuG2}, and $\tilde \mu$ is the shifted drift:
\begin{align}
\tilde \mu &= \mu - \nabla \cdot G + G \, \nabla f(x) \notag  \\
&=  - t \nabla^2 f(x) G \, \nabla f(x)
- t \sqrt{G} \, \Tr(\nabla^3 f(x) \, G) - \nabla \cdot G.  \label{Eq:mu}
\end{align}

\begin{lemma}\label{Lem:Bound}
Assume $\nu$ is $(\Lg,\Lh)$-smooth.
For $0 \le t \le \min\{ \frac{1}{8\Lg}, \frac{1}{\Lh} \}$, we have the following bounds:
\begin{align}
\frac{3}{4} I \, \preceq \, G \, &\preceq \, \frac{4}{3} I   \label{Eq:Want1} \\
\|\tilde \mu\| &\le \frac{4}{3} t\Lg \|\nabla f(x)\| +  6 tn^{\frac{3}{2}} \Lh.  \label{Eq:Want2}
\end{align}
\end{lemma}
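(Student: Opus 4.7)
The plan is to prove the two bounds of Lemma~\ref{Lem:Bound} separately. Both rely on the spectral structure of $H := I + t \nabla^2 f(x)$ and elementary matrix calculus.

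For the operator bound on $G$, I will use that $(L,M)$-smoothness forces $-LI \preceq \nabla^2 f(x) \preceq LI$, so the eigenvalues of the symmetric matrix $H$ lie in $[1-tL,\, 1+tL]$. Under the step size restriction $t \le \frac{1}{8L}$, these eigenvalues are pinned to $[\frac{7}{8},\frac{9}{8}]$, hence those of $G = H^{-2}$ lie in $[(9/8)^{-2},(7/8)^{-2}] = [\frac{64}{81},\frac{64}{49}] \subset [\frac{3}{4},\frac{4}{3}]$, giving~\eqref{Eq:Want1}. As a by-product, $\|H^{-1}\|_\op \le \frac{8}{7}$ and $\|\sqrt{G}\|_\op \le \sqrt{4/3}$, both of which I will reuse.

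For~\eqref{Eq:Want2}, I split $\tilde \mu$ into its three summands from~\eqref{Eq:mu} and bound each. The first summand $-t \nabla^2 f(x) G \nabla f(x)$ is immediate from submultiplicativity: its norm is at most $tL \cdot \|G\|_\op \cdot \|\nabla f(x)\| \le \frac{4}{3} tL \|\nabla f(x)\|$, matching the first term of the stated bound. For the second summand $-t\sqrt{G}\,\Tr(\nabla^3 f(x)\,G)$, I will control the trace vector componentwise via Cauchy–Schwarz:
\[
|\Tr(\nabla_i \nabla^2 f(x)\, G)| \le \|\nabla_i \nabla^2 f(x)\|_\HS \, \|G\|_\HS \le \sqrt{n}\,M \cdot \sqrt{n}\cdot \tfrac{4}{3} = \tfrac{4}{3} nM,
\]
using $\|\nabla_i \nabla^2 f(x)\|_\op \le M$ from $(L,M)$-smoothness and the just-proved $\|G\|_\op \le \frac{4}{3}$. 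Assembling the $n$ components and multiplying by $\|\sqrt{G}\|_\op \le \sqrt{4/3}$ gives a contribution of $(\frac{4}{3})^{3/2} t n^{3/2} M \approx 1.54\, t n^{3/2} M$.

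The third summand $\nabla \cdot G$ is the main obstacle, because it requires differentiating the matrix inverse. Writing $G = H^{-2}$ and using the identity $\partial_{x_k} H^{-1} = -H^{-1}(\partial_{x_k} H)H^{-1}$ together with the product rule yields
\[
\partial_{x_k} G = -t\bigl[H^{-1}(\nabla_k \nabla^2 f(x))\,H^{-2} + H^{-2}(\nabla_k \nabla^2 f(x))\,H^{-1}\bigr],
\]
so $\|\partial_{x_k} G\|_\op \le 2 (8/7)^3 \, tM$ by the bounds on $H^{-1}$ and $\nabla_k\nabla^2 f(x)$. Since $|(\nabla\cdot G)_i| = |\sum_j \partial_{x_j} G_{ij}| \le n \|\partial_{x_j} G\|_\op$, summing over $i$ gives $\|\nabla \cdot G\| \le 2 (8/7)^3 n^{3/2} tM \approx 2.98\, tn^{3/2} M$. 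Applying the triangle inequality and noting $(4/3)^{3/2} + 2(8/7)^3 \le 6$ produces the stated constant. The role of the second step-size restriction $t \le 1/M$ is mild: it ensures the higher-order contributions are controlled homogeneously, and in particular it makes the coefficient in front of $n^{3/2}M$ absorb all numerical slack into the clean constant $6$.
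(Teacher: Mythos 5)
Your proof is correct. For the spectral bound~\eqref{Eq:Want1}, the first drift term, and the $\Tr(\nabla^3 f\, G)$ term you follow essentially the same steps as the paper: eigenvalue bounds on $I+t\nabla^2 f(x)$, submultiplicativity, and the Hilbert--Schmidt estimate $|\Tr(\nabla_i\nabla^2 f(x)\,G)|\le n\|\nabla_i\nabla^2 f(x)\|_\op\|G\|_\op\le \frac{4}{3}nM$. Where you genuinely diverge is the bound on $\|\nabla\cdot G\|$. The paper proves $\|\nabla_k G\|_\op\le 4tM$ by a finite-difference argument, sandwiching $G(x+uv)-G(x)$ in Loewner order between $((1\pm utM)I+t\nabla^2 f(x))^{-2}-(I+t\nabla^2 f(x))^{-2}$ and bounding the eigenvalues of these two explicit matrices. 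You instead differentiate $G=H^{-2}$ directly via $\partial_k H^{-1}=-H^{-1}(\partial_k H)H^{-1}$ and the product rule, obtaining $\|\partial_k G\|_\op\le 2tM\|H^{-1}\|_\op^3\le 2(8/7)^3\,tM\approx 2.99\,tM$. Your route is shorter, gives a slightly smaller constant (so the final bound $6tn^{3/2}M$ holds with room to spare, since $(4/3)^{3/2}+2(8/7)^3<4.6$), does not actually need the restriction $t\le 1/M$ for this term, and avoids invoking any order-reversal property of the map $X\mapsto X^{-2}$ (which is not operator antitone for non-commuting arguments, a point the paper's sandwich step glosses over). It does require that $\nabla^2 f$ be differentiable so that $\nabla_k\nabla^2 f$ exists, but this is already part of the paper's definition of $(L,M)$-smoothness, so nothing is lost. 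The only cosmetic slip is writing $|(\nabla\cdot G)_i|\le n\|\partial_{x_j}G\|_\op$ with a dangling index $j$; what you mean, and what your final count uses, is $|(\nabla\cdot G)_i|\le\sum_j|\partial_{x_j}G_{ij}|\le\sum_j\|\partial_{x_j}G\|_\op\le n\max_j\|\partial_{x_j}G\|_\op$.
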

\begin{proof}
Since $-\Lg I \preceq \nabla^2 f(x) \preceq \Lg I$, we have 
\begin{align*}
\frac{1}{(1+t\Lg)^2} I \; \preceq \;  G = (I + t \nabla^2 f(x))^{-2} \; \preceq \;  \frac{1}{(1-t\Lg)^2} I.
\end{align*}
For $0 \le t \le \frac{1}{8\Lg}$, we have
\begin{align*}
\frac{3}{4} I \prec \left(\frac{8}{9}\right)^2 I \; \preceq \;  G \; \preceq \; \left(\frac{8}{7}\right)^2 I \prec \frac{4}{3} I
\end{align*}
as claimed in~\eqref{Eq:Want1}.

To bound $\tilde \mu$~\eqref{Eq:mu}, we apply triangle inequality:
\begin{align}\label{Eq:ToShow}
\|\tilde \mu\| \le  \| t \nabla^2 f(x) G \nabla f(x)\| 
+ \|t \sqrt{G} \, \Tr(\nabla^3 f(x) \, G)\| 
+ \| \nabla \cdot G\|.
\end{align}
We now bound the three terms above:
\begin{enumerate}
  \item For the first term, we have
  \begin{align*}
  \|t \nabla^2 f(x) G \nabla f(x)\| &\le t \|\nabla^2 f(x)\|_\op \|G\|_\op \|\nabla f(x)\| \\
  &\le \frac{4}{3} t\Lg \|\nabla f(x)\|.
  \end{align*}
  \item For the second term, we have:
  \begin{align*}
  \|t \sqrt{G} \, \Tr(\nabla^3 f(x) \, G)\| \le t \|\sqrt{G}\|_\op \|\Tr(\nabla^3 f(x) \, G)\|
  \end{align*}
  We have $\|\sqrt{G}\|_\op = \sqrt{\|G\|_\op} \le (\frac{4}{3})^{\frac{1}{2}}$.
  The $i$-th entry of $\Tr(\nabla^3 f(x) \, G)$ is $\Tr(\nabla_i \nabla^2 f(x) \, G)$, which is bounded:
  \begin{align*}
  |\Tr(\nabla_i \nabla^2 f(x) \, G)| &\le \|\nabla_i \nabla^2 f(x)\|_\HS \|G\|_\HS \\
  &\le n \|\nabla_i \nabla^2 f(x)\|_\op \|G\|_\op \\
  &\le \frac{4}{3} n\Lh.
  \end{align*}
  Therefore, $\|\Tr(\nabla^3 f(x) \, G)\| \le \frac{4}{3} n^{\frac{3}{2}} \Lh$.
  Then we can bound the second term of~\eqref{Eq:ToShow} by
  \begin{align*}
    \|t \sqrt{G} \, \Tr(\nabla^3 f(x) \, G)\| \le t \left(\frac{4}{3}\right)^{\frac{3}{2}} n^{\frac{3}{2}} \Lh < 2 tn^{\frac{3}{2}} \Lh.
  \end{align*}
  \item For the third term of~\eqref{Eq:ToShow}, we have 
  \begin{align*}
  \|\nabla \cdot G\|^2 = \sum_{i=1}^n (\nabla \cdot G_i)^2 
  &= \sum_{i=1}^n \Big( \sum_{j=1}^n \part{G_{ij}}{x_j} \Big)^2 \\
  &\le  n \sum_{i=1}^n \sum_{j=1}^n \Big(\part{G_{ij}}{x_j} \Big)^2 \\
  &\le  n \sum_{i=1}^n \sum_{j=1}^n \sum_{k=1}^n \Big(\part{G_{ij}}{x_k} \Big)^2 \\
  &= n \sum_{k=1}^n \|\nabla_k G\|^2_\HS \\
  &\le n^2 \sum_{k=1}^n \|\nabla_k G\|^2_\op.
  \end{align*}
  We now claim that for each $k = 1,\dots,n$,
  \begin{align}\label{Eq:ToShow2}
  \|\nabla_k G\|_\op \le 4t\Lh
  \end{align}
  which will imply the desired bound $\|\nabla \cdot G\| \le 4t n^{\frac{3}{2}} \Lh$. 

  To show~\eqref{Eq:ToShow2}, we will show that for each $x \in \R^n$, unit vector $v \in \R^n$, $\|v\|=1$, and $0 < u < \frac{1}{2}$,
  \begin{align}\label{Eq:ToShow3}
  \|G(x+uv) - G(x)\|_\op \le 4t\Lh(u + o(u)).
  \end{align}
  Since $\nabla^2 f$ is $\Lh$-Lipschitz, we have
  \begin{align*}
  \nabla^2 f(x) -u \Lh I \;\preceq\; \nabla^2 f(x+uv) \;\preceq\; \nabla^2 f(x) + u \Lh I.
  \end{align*}
  Therefore,
  \begin{align*}
  (1-ut\Lh)I + t \nabla^2 f(x) \;\preceq\; I + t \nabla^2 f(x+uv) \;\preceq\; (1+ut\Lh)I + t \nabla^2 f(x).
  \end{align*}
  Then
  \begin{align*}
  ((1+ut\Lh)I + t \nabla^2 f(x))^{-2} \;\preceq\; (I + t \nabla^2 f(x+uv))^{-2} \;\preceq\; ((1-ut\Lh)I + t \nabla^2 f(x))^{-2}.
  \end{align*}
  This implies
  \begin{align}
  &((1+ut\Lh)I + t \nabla^2 f(x))^{-2} - (I + t \nabla^2 f(x))^{-2} \label{Eq:LHS} \\
  & ~~~~~~ \preceq \; G(x+uv) - G(x) \;= \; (I + t \nabla^2 f(x+uv))^{-2} - (I + t \nabla^2 f(x))^{-2} \\
  & ~~~~~~ \preceq \; ((1-ut\Lh)I + t \nabla^2 f(x))^{-2} - (I + t \nabla^2 f(x))^{-2}.  \label{Eq:RHS}
  \end{align}
  For each eigenvalue $-L \le \lambda \le L$ of $\nabla^2 f(x)$, the eigenvalue of the left-hand side~\eqref{Eq:LHS} above is at most (in magnitude)
  \begin{align}
  \left|\frac{1}{(1+ut\Lh + t \lambda)^2} -   \frac{1}{(1 + t \lambda)^2} \right|
   &=  \frac{(1+ut\Lh + t \lambda)^2 - (1 + t \lambda)^2}{(1 + t \lambda)^2(1+ut\Lh + t \lambda)^2}  \notag \\
   &= \frac{ut\Lh(2 + ut\Lh + 2t \lambda)}{(1 + t \lambda)^2(1+ut\Lh + t \lambda)^2}  \notag \\
   &\le \frac{ut\Lh(2 + ut\Lh + 2t L)}{(1 - tL)^2(1+ut\Lh - tL)^2}  \notag \\
   &\le \frac{ut\Lh(2 + u + \frac{1}{4})}{(1 - \frac{1}{8})^2(1+0 - \frac{1}{8})^2}  \notag \\
   &= \frac{9}{4} \left(\frac{8}{7}\right)^4 ut\Lh \left(1+\frac{4}{9} u \right) \notag \\
   &< 4ut\Lh \left(1+\frac{4}{9} u\right).  \label{Eq:BdLHS}
  \end{align}
  Similarly, eigenvalue of the right-hand side~\eqref{Eq:RHS} above is at most
  \begin{align}
  \frac{1}{(1-ut\Lh + t \lambda)^2} -   \frac{1}{(1 + t \lambda)^2} 
   &=  \frac{(1 + t \lambda)^2 - (1-ut\Lh + t \lambda)^2}{(1 + t \lambda)^2(1-ut\Lh + t \lambda)^2}  \notag \\
   &= \frac{ut\Lh(2 - ut\Lh + 2t \lambda)}{(1 + t \lambda)^2(1-ut\Lh + t \lambda)^2}  \notag \\
   &\le \frac{ut\Lh(2 - ut\Lh + 2t L)}{(1 - t L)^2(1-ut\Lh - t L)^2}  \notag \\
   &\le \frac{ut\Lh(2 - 0 + \frac{1}{4})}{(1 - \frac{1}{8})^2(1-u - \frac{1}{8})^2}  \notag \\
   &= \frac{9}{4} \left(\frac{8}{7}\right)^4 \frac{ut\Lh}{(1-\frac{8}{7} u)^2} \notag \\
   &< \frac{4ut\Lh}{(1-\frac{8}{7} u)^2}.   \label{Eq:BdRHS}
  \end{align}
  Combining~\eqref{Eq:BdLHS} and~\eqref{Eq:BdRHS} gives
  \begin{align*}
  \|G(x+uv) - G(x)\|_\op &\le 4ut\Lh \max\left\{\left(1+\frac{4}{9} u\right), \frac{1}{(1-\frac{8}{7} u)^2}\right\}.
  \end{align*}
  Therefore, the partial derivative along direction $v$ is
  \begin{align*}
  \|\nabla_v G(x)\|_\op &= \left\| \lim_{u \to 0}  \frac{G(x+uv)-G(x)}{u} \right\|_\op \\
  &= \lim_{u \to 0} \frac{\|G(x+uv)-G(x)\|_\op}{u} \\
  &\le \lim_{u \to 0}  \; 4t\Lh \max\left\{\left(1+\frac{4}{9} u\right), \frac{1}{(1-\frac{8}{7} u)^2}\right\} \\
  &= 4t\Lh.
  \end{align*}
  In particular, when $v = e_k$, we have $\|\nabla_k G(x)\|_\op \le 4t\Lh$, as desired.  
\end{enumerate}
Plugging in the three bounds above to~\eqref{Eq:ToShow} yields the desired bound~\eqref{Eq:Want2} for $\tilde \mu$.
\end{proof}

\section{Proofs for Section~\ref{Sec:ResultRenyi}}
\label{Sec:ProofRenyi}

\subsection{Auxiliary results for LSI}

We recall the following decomposition result for R\'enyi divergence. 

\begin{lemma}[{\cite[Lemma~7]{VW19}}]\label{Lem:RenyiDecomp}
Let $q > 1$.
For all probability distributions $\rho$, $\nu$, and $\nu_\epsilon$,
\begin{align*}
R_{q,\nu}(\rho) \le \left(\frac{q-\frac{1}{2}}{q-1}\right) R_{2q,\nu_\epsilon}(\rho) + R_{2q-1,\nu}(\nu_\epsilon).
\end{align*}
\end{lemma}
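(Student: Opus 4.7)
The plan is to prove the inequality purely by Cauchy--Schwarz (a special case of H\"older's inequality) applied to the integral representation of R\'enyi divergence, using $\nu_\epsilon$ as an intermediary between $\rho$ and $\nu$. First I would factor the integrand as
\begin{align*}
\int_{\R^n} \frac{\rho(x)^q}{\nu(x)^{q-1}}\,dx
= \int_{\R^n} \frac{\rho(x)^q}{\nu_\epsilon(x)^{q-\frac{1}{2}}} \cdot \frac{\nu_\epsilon(x)^{q-\frac{1}{2}}}{\nu(x)^{q-1}}\,dx,
\end{align*}
which is valid since the $\nu_\epsilon^{q-\frac{1}{2}}$ factors cancel, and then apply Cauchy--Schwarz in $L^2(dx)$ to upper bound the right-hand side by
\begin{align*}
\left(\int_{\R^n} \frac{\rho(x)^{2q}}{\nu_\epsilon(x)^{2q-1}}\,dx\right)^{\!\frac{1}{2}}
\left(\int_{\R^n} \frac{\nu_\epsilon(x)^{2q-1}}{\nu(x)^{2q-2}}\,dx\right)^{\!\frac{1}{2}}.
\end{align*}

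The next step is to recognize each factor as an exponential of a R\'enyi divergence. By the definition~\eqref{Eq:RenyiDef},
\begin{align*}
\int_{\R^n} \frac{\rho^{2q}}{\nu_\epsilon^{2q-1}}\,dx = e^{(2q-1) R_{2q,\nu_\epsilon}(\rho)}, \qquad
\int_{\R^n} \frac{\nu_\epsilon^{2q-1}}{\nu^{2q-2}}\,dx = e^{(2q-2) R_{2q-1,\nu}(\nu_\epsilon)}.
\end{align*}
Substituting these into the Cauchy--Schwarz bound, taking the logarithm, and dividing by $q-1$ yields
\begin{align*}
R_{q,\nu}(\rho) \le \frac{2q-1}{2(q-1)}\, R_{2q,\nu_\epsilon}(\rho) + \frac{2q-2}{2(q-1)}\, R_{2q-1,\nu}(\nu_\epsilon),
\end{align*}
which is exactly the claimed bound after simplifying $\frac{2q-1}{2(q-1)} = \frac{q-\frac{1}{2}}{q-1}$ and $\frac{2q-2}{2(q-1)} = 1$.

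The only non-routine choice in this argument is the exponent $q-\tfrac{1}{2}$ assigned to $\nu_\epsilon$ when factoring the integrand. This split is forced by requiring that Cauchy--Schwarz produce R\'enyi divergences with precisely the orders $2q$ and $2q-1$ appearing in the statement; more generally one could apply H\"older with conjugate exponents $(p,p')$ to obtain a family of analogous decompositions, and the symmetric choice $p = p' = 2$ is what yields the stated form. Since the argument is essentially algebraic once the split is chosen, and the assumption $q > 1$ keeps all exponents positive and well-defined, I do not foresee any substantive obstacle; the bound holds trivially when either R\'enyi divergence on the right-hand side is infinite, and otherwise all integrals are finite and Cauchy--Schwarz applies directly.
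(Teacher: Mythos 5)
Your proof is correct, and it is essentially the same Cauchy--Schwarz argument used in the cited source \cite{VW19} (the present paper states this lemma without reproving it, importing it from there): factor the integrand through $\nu_\epsilon^{q-\frac{1}{2}}$, apply Cauchy--Schwarz, and identify the two resulting integrals as $e^{(2q-1)R_{2q,\nu_\epsilon}(\rho)}$ and $e^{(2q-2)R_{2q-1,\nu}(\nu_\epsilon)}$. The exponent bookkeeping and the handling of the infinite case are both fine.
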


We recall R\'enyi divergence is preserved under bijective map. 
Here for $T \colon \R^n \to \R^n$ and a probability distribution $\rho$, the pushforward $T_\#\rho$ is the distribution of $T(x)$ when $x \sim \rho$.

\begin{lemma}[{\cite[Lemma~13]{VW19}}]\label{Lem:RenyiBij}
Let $T \colon \R^n \to \R^n$ be a differentiable bijective map.
For any probability distributions $\rho,\nu$, and for all $q > 0$,
\begin{align*}
R_{q,T_\#\nu}(T_\#\rho) = R_{q,\nu}(\rho).
\end{align*}
\end{lemma}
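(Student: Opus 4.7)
The plan is to carry out a direct change of variables in the integral defining R\'enyi divergence. First I would recall the pushforward density formula: for a differentiable bijection $T$ with inverse $T^{-1}$, any probability density $\mu$ on $\R^n$ pushes forward to
\begin{align*}
(T_\#\mu)(y) = \mu(T^{-1}(y)) \, |\det J_{T^{-1}}(y)|,
\end{align*}
where $J_{T^{-1}}$ denotes the Jacobian matrix of $T^{-1}$. Applying this formula to both $\rho$ and $\nu$ in the integrand of $R_{q,T_\#\nu}(T_\#\rho)$, the Jacobian factor appears to the power $q - (q-1) = 1$:
\begin{align*}
\frac{(T_\#\rho)(y)^q}{(T_\#\nu)(y)^{q-1}} = \frac{\rho(T^{-1}(y))^q}{\nu(T^{-1}(y))^{q-1}} \, |\det J_{T^{-1}}(y)|.
\end{align*}

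Next I would substitute $x = T^{-1}(y)$, under which $dy = |\det J_T(x)| \, dx = |\det J_{T^{-1}}(y)|^{-1} \, dx$. The remaining Jacobian factor cancels exactly, yielding
\begin{align*}
\int_{\R^n} \frac{(T_\#\rho)(y)^q}{(T_\#\nu)(y)^{q-1}} \, dy = \int_{\R^n} \frac{\rho(x)^q}{\nu(x)^{q-1}} \, dx.
\end{align*}
Applying $\frac{1}{q-1}\log$ to both sides gives $R_{q,T_\#\nu}(T_\#\rho) = R_{q,\nu}(\rho)$ for $q \neq 1$. The case $q = 1$ (which is KL divergence, and though not strictly in the statement is the boundary of the allowed range) follows either by continuity in $q$, or directly by the same change-of-variables argument applied to $\int \rho \log(\rho/\nu)\,dx$, since the Jacobian factors inside the logarithm cancel between numerator and denominator, and the single remaining Jacobian factor multiplying the integrand cancels with the one from $dy$.

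There is no substantive obstacle. The structural reason the identity holds is that the exponents $q$ and $q-1$ in the definition of R\'enyi divergence differ by exactly $1$, so the two Jacobian factors arising from the pushforward densities combine to a single factor that is precisely cancelled by the Jacobian of the inverse transformation in the change of variables. The only care required is to track absolute values of determinants and to observe that the bijectivity and differentiability of $T$ (in practice, the implicit assumption that $T$ is a diffeomorphism where needed, so that $J_T$ is invertible almost everywhere) is what justifies both the pushforward formula and the substitution.
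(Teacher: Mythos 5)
Your proof is correct: the change-of-variables computation, in which the two pushforward Jacobian factors combine to the single power $q-(q-1)=1$ and are then cancelled exactly by the Jacobian of the substitution $x = T^{-1}(y)$, is the standard argument for this invariance, and your closing remark that one implicitly needs $T$ to be a diffeomorphism (so that $J_T$ is invertible almost everywhere) correctly identifies the only hypothesis being used beyond the bare statement. Note that the paper itself imports this lemma from \cite[Lemma~13]{VW19} without giving a proof, so there is no in-paper argument to compare against; yours is the natural and complete one.
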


We recall how the LSI constant decays along Gaussian convolution. 

\begin{lemma}[{\cite[Lemma~15]{VW19}}]\label{Lem:LSIGaussianConv}
Suppose $\nu$ satisfies LSI with constant $\alpha > 0$.
For $t > 0$, the distribution $\tilde \nu_t = \nu \ast \mathcal{N}(0,\,2tI)$ satisfies LSI with constant $\big(\frac{1}{\alpha}+2t \big)^{-1}$.
\end{lemma}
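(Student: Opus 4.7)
The plan is to establish the harmonic-sum rule for LSI under convolution via the classical entropy tensorization argument. Writing $\tilde\nu_t$ as the law of $Y = X + \sqrt{2t}\, Z$ with independent $X \sim \nu$ and $Z \sim \N(0,I)$, I would bound $\mathrm{Ent}_{\tilde\nu_t}(g^2)$ for any smooth test function $g \colon \R^n \to \R$ by combining LSI on each factor of the product measure $\nu \otimes \N(0,I)$ and then recombining.

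First, set $F(x,z) = g(x+\sqrt{2t}\, z)$ and invoke the additivity identity for entropy on a product space,
\[
\mathrm{Ent}_{\nu \otimes \N(0,I)}(F^2) = \E_X\!\left[\mathrm{Ent}_{\N(0,I)}\bigl(F(X,\cdot)^2\bigr)\right] + \mathrm{Ent}_\nu\!\left(\E_Z\bigl[F(\cdot,Z)^2\bigr]\right).
\]
The left side equals $\mathrm{Ent}_{\tilde\nu_t}(g^2)$, since $F(x,z)$ depends on $(x,z)$ only through $y=x+\sqrt{2t}\,z$ and $Y\sim\tilde\nu_t$. For the first summand I would apply LSI for $\N(0,I)$ (constant $1$) in the $z$-variable; by the chain rule $\nabla_z F = \sqrt{2t}\,\nabla g(x+\sqrt{2t}\,z)$, so the first summand is at most $4t\,\E_{\tilde\nu_t}[\|\nabla g\|^2]$.

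For the second summand, set $\hat G(x) = \E_Z[g(x+\sqrt{2t}\,Z)^2]$ and apply LSI for $\nu$ with constant $\alpha$ to get $\mathrm{Ent}_\nu(\hat G) \le \frac{2}{\alpha}\,\E_\nu[\|\nabla \sqrt{\hat G}\|^2]$. The critical estimate is Cauchy--Schwarz applied to $\nabla \hat G(x) = 2\,\E_Z[g\,\nabla g]$, which gives
\[
\|\nabla\sqrt{\hat G}(x)\|^2 \;=\; \frac{\|\E_Z[g\,\nabla g]\|^2}{\hat G(x)} \;\le\; \E_Z\bigl[\|\nabla g(x+\sqrt{2t}\,Z)\|^2\bigr],
\]
so this contribution is at most $\frac{2}{\alpha}\,\E_{\tilde\nu_t}[\|\nabla g\|^2]$. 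Summing the two bounds yields $\mathrm{Ent}_{\tilde\nu_t}(g^2) \le \bigl(\tfrac{2}{\alpha} + 4t\bigr)\E_{\tilde\nu_t}[\|\nabla g\|^2]$, which is precisely LSI for $\tilde\nu_t$ with constant $\bigl(\tfrac{1}{\alpha}+2t\bigr)^{-1}$.

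The main subtlety is the Cauchy--Schwarz step that produces $\|\nabla\sqrt{\hat G}\|^2 \le \E_Z[\|\nabla g\|^2]$: a naive tensorization-plus-Lipschitz-projection argument (exploiting that $(x,z)\mapsto x+\sqrt{2t}\,z$ is Lipschitz from $\nu\otimes\N(0,I)$ onto $\R^n$) only yields a constant like $\min(\alpha,1)/(1+2t)$, which is strictly worse than the harmonic mean when $\alpha$ and $2t$ are not of the same order; the reduction to the conditional second moment $\hat G$ is what produces the sharp dependence stated in the lemma.
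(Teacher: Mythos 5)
Your proof is correct: the exact additivity of entropy over the product $\nu \otimes \N(0,I)$, the Gaussian LSI applied in the $z$-variable with the chain-rule factor $2t$, and the Cauchy--Schwarz bound $\|\nabla\sqrt{\hat G}\|^2 \le \E_Z[\|\nabla g\|^2]$ together give exactly $\mathrm{Ent}_{\tilde\nu_t}(g^2) \le \bigl(\tfrac{2}{\alpha}+4t\bigr)\E_{\tilde\nu_t}[\|\nabla g\|^2]$, which is the claimed harmonic-sum constant in the paper's normalization of LSI. The paper itself does not reprove this lemma but imports it from \cite[Lemma~15]{VW19}, and your argument is the standard tensorization-plus-Cauchy--Schwarz proof used there; your closing remark that the naive Lipschitz-projection route only yields $\min(\alpha,1)/(1+2t)$ is also accurate and correctly identifies why the conditional-second-moment step is the essential one.
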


We recall the formula for the decrease of R\'enyi divergence along simultaneous heat flow. 
Here
\begin{align}
F_{q,\nu}(\rho) = \E_\nu\left[\left(\frac{\rho}{\nu}\right)^q\right] = \int_{\R^n} \nu(x) \frac{\rho(x)^q}{\nu(x)^q} \, dx = \int_{\R^n} \frac{\rho(x)^q}{\nu(x)^{q-1}} dx
\end{align}
so we can write the R\'enyi divergence as $R_{q,\nu}(\rho) = \frac{1}{q-1} \log F_{q,\nu}(\rho)$, and
\begin{align}\label{Eq:InfoDef}
G_{q,\nu}(\rho) 
= \E_\nu\Big[\Big(\frac{\rho}{\nu}\Big)^q \Big\|\nabla \log \frac{\rho}{\nu} \Big\|^2\Big] 
= \E_\nu\Big[\Big(\frac{\rho}{\nu}\Big)^{q-2} \Big\|\nabla \frac{\rho}{\nu} \Big\|^2\Big] 
= \frac{4}{q^2} \E_\nu\Big[\Big\|\nabla \Big(\frac{\rho}{\nu}\Big)^{\frac{q}{2}}\Big\|^2\Big]
\end{align}
is the R\'enyi information.
Note the case $q=1$ recovers relative Fisher information: $G_{1,\nu}(\rho) = J_\nu(\rho)$.

\begin{lemma}[{\cite[Lemma~16]{VW19}}]\label{Lem:RenyiHeat}
For any probability distributions $\rho_0,\nu_0$,
and for any  $t \ge 0$, let $\rho_t = \rho_0 \ast \N(0,2tI)$ and $\nu_t = \nu_0 \ast \N(0,2tI)$.
Then for all $q > 0$,
\begin{align*} 
\frac{d}{dt} R_{q,\nu_t}(\rho_t) = -q\frac{G_{q,\nu_t}(\rho_t)}{F_{q,\nu_t}(\rho_t)}.
\end{align*}
\end{lemma}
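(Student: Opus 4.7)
The plan is to differentiate $R_{q,\nu_t}(\rho_t)$ directly in $t$, exploiting the fact that both $\rho_t$ and $\nu_t$ solve the heat equation $\partial_t \mu = \Delta \mu$, since $\N(0,2tI)$ is the fundamental solution of that equation. Since $R_{q,\nu}(\rho) = \frac{1}{q-1} \log F_{q,\nu}(\rho)$ with $F_{q,\nu}(\rho) = \int_{\R^n} \rho^q \nu^{1-q}\, dx$, it suffices to compute $\frac{d}{dt} F_{q,\nu_t}(\rho_t)$ and then divide by $(q-1)\, F_{q,\nu_t}(\rho_t)$.

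First I would apply the product rule to obtain
$$\frac{d}{dt} F_{q,\nu_t}(\rho_t) = q \int \rho_t^{q-1} \nu_t^{1-q}\, \Delta \rho_t \, dx + (1-q) \int \rho_t^q \nu_t^{-q}\, \Delta \nu_t \, dx,$$
and then integrate each term by parts to convert the Laplacians into gradient-gradient pairings. Expanding $\nabla(\rho^{q-1}\nu^{1-q})$ and $\nabla(\rho^q \nu^{-q})$ via the Leibniz rule produces three types of quadratic contributions, in $|\nabla \rho|^2$, $\nabla \rho \cdot \nabla \nu$, and $|\nabla \nu|^2$.

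The key algebraic step is to observe that when the contributions from both integrals are collected, they share the matching prefactor $-q(q-1)$ and assemble into a perfect square: after pulling out the common factor $\rho^{q-2}\nu^{-q-1}$, the bracket reads
$$\nu^2|\nabla\rho|^2 - 2\rho\nu\, \nabla\rho \cdot \nabla\nu + \rho^2|\nabla\nu|^2 = \|\nu \nabla\rho - \rho \nabla\nu\|^2 = \nu^4 \|\nabla(\rho/\nu)\|^2.$$
Using $\rho^{q-2}\nu^{-q-1}\cdot \nu^4 = \nu\,(\rho/\nu)^{q-2}$ and comparing with the definition~\eqref{Eq:InfoDef} identifies the outcome as $-q(q-1)\, G_{q,\nu_t}(\rho_t)$. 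Dividing by $(q-1)\, F_{q,\nu_t}(\rho_t)$ then yields the claimed identity.

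The main obstacle is the bookkeeping in matching the cross-term coefficients: the two integrations by parts each contribute an integral involving $\nabla\rho \cdot \nabla \nu$, and one must verify that they combine with precisely the factor of $2$ needed to complete the square. This is what makes the computation work uniformly in $q$, including the sign changes at $q=1$ (where the $\frac{1}{q-1}$ prefactor is singular but cancels with the $(q-1)$ produced by differentiation). A secondary concern is regularity: boundary terms in the integration by parts must vanish, which holds because convolution with $\N(0,2tI)$ for any $t>0$ makes $\rho_t$ and $\nu_t$ smooth and rapidly decaying. The boundary case $q=1$ can be handled either by a direct analogous computation that recovers the standard De Bruijn identity~\eqref{Eq:DeBruijn}, or by continuity in $q$ of both sides of the identity.
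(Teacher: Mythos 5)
Your proposal is correct and is essentially the argument behind the cited result \cite[Lemma~16]{VW19}; the paper itself does not reprove this lemma, but the standard derivation is exactly the computation you describe: differentiate $F_{q,\nu_t}(\rho_t)$ using the heat equation for both densities, integrate by parts, and complete the square to obtain $\frac{d}{dt}F_{q,\nu_t}(\rho_t) = -q(q-1)\,G_{q,\nu_t}(\rho_t)$. The bookkeeping you flag does work out --- both cross terms carry the coefficient $+q(q-1)\int \rho^{q-1}\nu^{-q}\,\langle \nabla\rho,\nabla\nu\rangle\,dx$ and together supply the $-2$ needed for the perfect square --- and the identity $\rho^{q-2}\nu^{-q-1}\cdot\nu^4 = \nu\,(\rho/\nu)^{q-2}$ matches the definition~\eqref{Eq:InfoDef} of $G_{q,\nu}$, so dividing by $(q-1)F_{q,\nu_t}(\rho_t)$ gives the claim.
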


Finally, we recall the following relation between R\'enyi information and divergence under LSI. 
Note the case $q=1$ recovers the original definition~\eqref{Eq:LSI} of LSI.

\begin{lemma}[{\cite[Lemma~5]{VW19}}]\label{Lem:RenyiLSI}
Suppose $\nu$ satisfies LSI with constant $\alpha > 0$.
Let $q \ge 1$.
For all $\rho$,
\begin{align*} 
\frac{G_{q,\nu}(\rho)}{F_{q,\nu}(\rho)} \ge \frac{2\alpha}{q^2} R_{q,\nu}(\rho).
\end{align*}
\end{lemma}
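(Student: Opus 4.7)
The plan is to derive the bound by applying the log-Sobolev inequality (with constant $\alpha$) to the auxiliary function $g := (\rho/\nu)^{q/2}$. This is the canonical ``square-root'' substitution that upgrades the entropic LSI (the $q=1$ case) into a statement about the higher powers of $\rho/\nu$ that are encoded by $R_{q,\nu}(\rho)$. With $g^2 = (\rho/\nu)^q$, the three quantities appearing in LSI simplify to $\E_\nu[g^2] = F_{q,\nu}(\rho)$, $\E_\nu[g^2 \log g^2] = q \int (\rho/\nu)^q \log(\rho/\nu)\, d\nu$, and, by the third form in the definition~\eqref{Eq:InfoDef}, $\E_\nu[\|\nabla g\|^2] = \frac{q^2}{4} G_{q,\nu}(\rho)$.

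Substituting these into LSI $\E_\nu[g^2 \log g^2] - \E_\nu[g^2]\log \E_\nu[g^2] \le \frac{2}{\alpha}\E_\nu[\|\nabla g\|^2]$ gives
\begin{align*}
q \int (\rho/\nu)^q \log(\rho/\nu)\, d\nu \;-\; F_{q,\nu}(\rho)\log F_{q,\nu}(\rho) \;\le\; \frac{q^2}{2\alpha} G_{q,\nu}(\rho).
\end{align*}
The next step is to push the first term into an expectation against $\rho$, so that Jensen's inequality becomes applicable. Let $Z := (\rho/\nu)^{q-1}$. Then $\log(\rho/\nu) = \frac{1}{q-1}\log Z$, $(\rho/\nu)^q\, d\nu = Z\, d\rho$, and $F_{q,\nu}(\rho) = \E_\rho[Z]$, so the LSI bound rewrites as
\begin{align*}
\frac{q}{q-1}\,\E_\rho[Z \log Z] \;-\; \E_\rho[Z] \log \E_\rho[Z] \;\le\; \frac{q^2}{2\alpha} G_{q,\nu}(\rho).
\end{align*}

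Now I would invoke Jensen for the convex map $t \mapsto t\log t$, which yields $\E_\rho[Z \log Z] \ge \E_\rho[Z]\log \E_\rho[Z] = F_{q,\nu}(\rho)\log F_{q,\nu}(\rho)$. Multiplying by $\frac{q}{q-1} > 0$ and collecting the two $F\log F$ terms with net coefficient $\frac{q}{q-1} - 1 = \frac{1}{q-1}$ lower-bounds the left-hand side by $\frac{F_{q,\nu}(\rho)\log F_{q,\nu}(\rho)}{q-1}$. Dividing by $F_{q,\nu}(\rho) > 0$ and recognizing $\frac{1}{q-1}\log F_{q,\nu}(\rho) = R_{q,\nu}(\rho)$ produces the claim $G_{q,\nu}(\rho)/F_{q,\nu}(\rho) \ge \frac{2\alpha}{q^2}\,R_{q,\nu}(\rho)$.

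I do not expect a real obstacle: once the test function $g = (\rho/\nu)^{q/2}$ is chosen, the rest is bookkeeping plus a single Jensen step. The only minor sanity check is that $F_{q,\nu}(\rho) \ge 1$ for $q > 1$, which follows from Jensen applied to $t \mapsto t^q$ together with $\E_\nu[\rho/\nu] = 1$; this ensures $R_{q,\nu}(\rho) \ge 0$ and that the resulting inequality is non-vacuous.
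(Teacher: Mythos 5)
Your proof is correct and follows the standard argument for \cite[Lemma~5]{VW19}, which the paper imports without reproving: apply the functional form of LSI, $\E_\nu[g^2\log g^2]-\E_\nu[g^2]\log\E_\nu[g^2]\le\frac{2}{\alpha}\E_\nu[\|\nabla g\|^2]$, to $g=(\rho/\nu)^{q/2}$ and finish with Jensen for the convex map $t\mapsto t\log t$. The only caveat is that your $\frac{1}{q-1}$ bookkeeping implicitly assumes $q>1$; the remaining case $q=1$ of the lemma is just the definition of LSI itself, so nothing is lost.
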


\subsection{Proof of Theorem~\ref{Thm:RenyiRate}}\label{App:ProofThmRenyiRate}

We first show R\'enyi divergence to $\nu_\epsilon$ converges exponentially fast along PLA when $\nu_\epsilon$ satisfies LSI.
The following is analogous to~\cite[Lemma~8]{VW19} for ULA.

\begin{lemma}\label{Lem:RenyiRateLSI}
Assume $\nu_\epsilon$ satisfies LSI with constant $\beta > 0$.
Assume $\nu = e^{-f}$ is $L$-smooth, and $0 < \epsilon < \min\left\{\frac{1}{L}, \frac{1}{2\beta}\right\}$.
For $q \ge 1$, along PLA,
\begin{align}\label{Eq:RenyiRateLSI}
R_{q,\nu_\epsilon}(\rho_k) \le e^{-\frac{\beta \epsilon k}{q}} R_{q,\nu_\epsilon}(\rho_0).
\end{align}
\end{lemma}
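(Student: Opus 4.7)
The plan is to decompose one step of PLA into a Gaussian convolution followed by an application of the proximal bijection, exploit the fact that the stationary distribution $\nu_\epsilon$ admits the same decomposition, and track R\'enyi divergence through each piece.

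First, I would rewrite the PLA update~\eqref{Eq:PLA} as $x_{k+1} = T(x_k + \sqrt{2\epsilon}z_k)$ where $T = (I + \epsilon \nabla f)^{-1}$. Since $\nu$ is $L$-smooth and $\epsilon < 1/L$, the map $I+\epsilon\nabla f$ is the gradient of the strongly convex function $x\mapsto \tfrac{1}{2}\|x\|^2 + \epsilon f(x)$, so $T$ is a well-defined smooth bijection on $\R^n$. Letting $\tilde\rho_k = \rho_k \ast \N(0,2\epsilon I)$, we then have $\rho_{k+1} = T_\#\tilde\rho_k$. Because $\nu_\epsilon$ is the stationary distribution of PLA, the same identity holds in equilibrium: $\nu_\epsilon = T_\#\tilde\nu_\epsilon$, where $\tilde\nu_\epsilon = \nu_\epsilon \ast \N(0,2\epsilon I)$. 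Invariance of R\'enyi divergence under bijections (Lemma~\ref{Lem:RenyiBij}) then gives
\begin{align*}
R_{q,\nu_\epsilon}(\rho_{k+1}) = R_{q,\tilde\nu_\epsilon}(\tilde\rho_k),
\end{align*}
so it suffices to control the decrease of R\'enyi divergence along the Gaussian convolution $\rho_k \mapsto \tilde\rho_k$, with reference flowing simultaneously from $\nu_\epsilon$ to $\tilde\nu_\epsilon$.

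Next, I would realize $\tilde\rho_k$ and $\tilde\nu_\epsilon$ as the time-$\epsilon$ values of the simultaneous heat flow $\rho_t = \rho_k \ast \N(0,2tI)$ and $\nu_t = \nu_\epsilon \ast \N(0,2tI)$. By Lemma~\ref{Lem:RenyiHeat},
\begin{align*}
\frac{d}{dt} R_{q,\nu_t}(\rho_t) = -q\,\frac{G_{q,\nu_t}(\rho_t)}{F_{q,\nu_t}(\rho_t)}.
\end{align*}
By Lemma~\ref{Lem:LSIGaussianConv}, $\nu_t$ satisfies LSI with constant $\beta_t = (1/\beta + 2t)^{-1}$, and by Lemma~\ref{Lem:RenyiLSI} applied to $\nu_t$,
\begin{align*}
\frac{G_{q,\nu_t}(\rho_t)}{F_{q,\nu_t}(\rho_t)} \ge \frac{2\beta_t}{q^2}\, R_{q,\nu_t}(\rho_t).
\end{align*}
Combining these and integrating from $t=0$ to $t=\epsilon$ yields
\begin{align*}
R_{q,\tilde\nu_\epsilon}(\tilde\rho_k) \le \exp\!\left(-\frac{2}{q}\int_0^\epsilon \beta_t\,dt\right) R_{q,\nu_\epsilon}(\rho_k) = (1+2\beta\epsilon)^{-1/q}\, R_{q,\nu_\epsilon}(\rho_k).
\end{align*}

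Finally, I would use the hypothesis $\epsilon \le 1/(2\beta)$ to convert this bound into the clean exponential form. Since $\log(1+x) \ge x - x^2/2$ for $x \ge 0$, we have $\log(1+2\beta\epsilon) \ge 2\beta\epsilon(1-\beta\epsilon) \ge \beta\epsilon$ for $\beta\epsilon \le 1/2$, so $(1+2\beta\epsilon)^{-1/q} \le e^{-\beta\epsilon/q}$. Chaining the bijection identity with this estimate gives the one-step contraction $R_{q,\nu_\epsilon}(\rho_{k+1}) \le e^{-\beta\epsilon/q}\,R_{q,\nu_\epsilon}(\rho_k)$, and iterating produces~\eqref{Eq:RenyiRateLSI}.

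The main conceptual obstacle is the opening move of recognizing that $\nu_\epsilon$ factors through the same Gaussian convolution and proximal bijection as $\rho_k$; once that is in hand, the proof reduces to applying the heat-flow and LSI-under-convolution machinery already developed for ULA. A minor technical point is justifying that $T$ is a bijection (needing $\epsilon < 1/L$) and that the stationary identity $\nu_\epsilon = T_\#\tilde\nu_\epsilon$ indeed holds; both are straightforward but must be checked carefully.
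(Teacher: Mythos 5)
Your proposal is correct and follows essentially the same route as the paper's proof: the same decomposition of one PLA step into Gaussian convolution plus the proximal bijection, the same identity $\nu_\epsilon = T_\#\tilde\nu_\epsilon$ for the stationary law, and the same combination of Lemmas~\ref{Lem:RenyiHeat}, \ref{Lem:LSIGaussianConv}, \ref{Lem:RenyiLSI}, and \ref{Lem:RenyiBij}. The only (immaterial) difference is that you integrate the time-dependent LSI constant $\beta_t$ exactly to get $(1+2\beta\epsilon)^{-1/q}$ and then convert, whereas the paper simply lower-bounds $\beta_t \ge \beta/2$ uniformly on $[0,\epsilon]$ before integrating; both yield the identical one-step contraction $e^{-\beta\epsilon/q}$.
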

\begin{proof}
We will prove that along each step of PLA~\eqref{Eq:PLA} from $x_k \sim \rho_k$ to $x_{k+1} \sim \rho_{k+1}$, the R\'enyi divergence with respect to $\nu_\epsilon$ decreases by a constant factor:
\begin{align}\label{Eq:RenyiDecrease}
R_{q,\nu_\epsilon}(\rho_{k+1}) \le e^{-\frac{\beta \epsilon}{q}} R_{q,\nu_\epsilon}(\rho_k).
\end{align}
Iterating the bound above yields the desired claim~\eqref{Eq:RenyiRateLSI}.

We decompose each step of PLA~\eqref{Eq:PLA} into a sequence of two steps:
\begin{subequations}
\begin{align}
\tilde \rho_k &= \rho_k \ast \N(0,2\epsilon I), \label{Eq:PLA-1} \\
\rho_{k+1} &= (I + \epsilon \nabla f)^{-1}_\# \tilde \rho_k.  \label{Eq:PLA-2}
\end{align}
\end{subequations}

In the first step~\eqref{Eq:PLA-2} we convolve with a Gaussian, which is the result of evolving along the heat flow at time $\epsilon$.
For $0 \le t \le \epsilon$, let $\rho_{k,t} = \rho_k \ast \N(0,2t I)$ and $\nu_{\epsilon,t} = \nu_\epsilon \ast \N(0,2tI)$,
so $\tilde \rho_k = \rho_{k,\epsilon}$, and let $\tilde \nu_\epsilon =  \nu_{\epsilon,\epsilon}$.
By Lemma~\ref{Lem:RenyiHeat},
\begin{align*}
\frac{d}{dt} R_{q,\nu_{\epsilon,t}}(\rho_{k,t}) = -q\frac{G_{q,\nu_{\epsilon,t}}(\rho_{k,t})}{F_{q,\nu_{\epsilon,t}}(\rho_{k,t})}.
\end{align*}
Since $\nu_\epsilon$ satisfies LSI with constant $\beta$, by Lemma~\ref{Lem:LSIGaussianConv}, $\nu_{\epsilon,t}$ satisfies LSI with constant $(\frac{1}{\beta}+2t)^{-1} \ge (\frac{1}{\beta}+2\epsilon )^{-1} \ge \frac{\beta}{2}$ for $0 \le t \le \epsilon \le \frac{1}{2\beta}$.
Then by Lemma~\ref{Lem:RenyiLSI},
\begin{align*}
\frac{d}{dt} R_{q, \nu_{\epsilon,t}}(\rho_{k,t}) = -q\frac{G_{q,\nu_{\epsilon,t}}(\rho_{k,t})}{F_{q,\nu_{\epsilon,t}}(\rho_{k,t})} \le -\frac{\beta}{q} R_{q,\nu_{\epsilon,t}}(\rho_{k,t}).
\end{align*}
Integrating over $0 \le t \le \epsilon$ gives
\begin{align}\label{Eq:Renyi-Calc2}
R_{q, \tilde \nu_\epsilon}(\tilde \rho_k) = R_{q,\nu_{\epsilon,\epsilon}}(\rho_{k,\epsilon}) \le e^{-\frac{\beta \epsilon}{q}} R_{q, \nu_\epsilon}(\rho_k).
\end{align}

In the second step~\eqref{Eq:PLA-2} we apply the proximal map $T(x) = (I + \epsilon \nabla f)^{-1}(x)$.
Since $\nabla f$ is $L$-Lipschitz and $\epsilon < \frac{1}{L}$, $T$ is a bijection.
Furthermore, $\rho_{k+1} = T_\# \tilde \rho_k$ and $\nu_\epsilon = T_\# \tilde \nu_\epsilon$.
Therefore, by Lemma~\ref{Lem:RenyiBij},
\begin{align}\label{Eq:Renyi-Calc1}
R_{q,\nu_\epsilon}(\rho_{k+1}) = R_{q, T_\# \tilde \nu_\epsilon}(T_\# \tilde \rho_k) = R_{q,\tilde \nu_\epsilon}(\tilde \rho_k).
\end{align}
Combining~\eqref{Eq:Renyi-Calc2} and~\eqref{Eq:Renyi-Calc1} gives the desired inequality~\eqref{Eq:RenyiDecrease}.
\end{proof}

\begin{proof}[Proof of Theorem~\ref{Thm:RenyiRate}]
By Lemma~\ref{Lem:RenyiDecomp} and Lemma~\ref{Lem:RenyiRateLSI},
\begin{align*}
R_{q,\nu}(\rho_k) &\le \left(\frac{q-\frac{1}{2}}{q-1}\right) R_{2q,\nu_\epsilon}(\rho_k) + R_{2q-1,\nu}(\nu_\epsilon) \\
&\le \left(\frac{q-\frac{1}{2}}{q-1}\right) e^{-\frac{\beta \epsilon k}{2q}} R_{2q,\nu_\epsilon}(\rho_0) + R_{2q-1,\nu}(\nu_\epsilon).
\end{align*}
\end{proof}

\subsection{Proof of Lemma~\ref{Lem:InitRenyi}}

\begin{proof}[Proof of Lemma~\ref{Lem:InitRenyi}]
The biased limit $\nu_\epsilon = (I+\epsilon \nabla f)^{-1}_\#(\nu_\epsilon \ast \N(0,2\epsilon I))$ satisfies $\tilde \nu_\epsilon = (I + \epsilon \nabla f)_\# \nu_\epsilon = \nu_\epsilon \ast \N(0,2\epsilon I)$, so in particular $\tilde \nu_\epsilon$ is $(\frac{1}{2\epsilon},\infty)$-smooth.
Let $\tilde \rho_0 = \N(x^\ast,2\epsilon I)$ where $x^\ast$ is a stationary point of $f$ ($\nabla f(x^\ast) = 0$).
By~\cite[Lemma~4]{VW19}, we have $R_{q,\tilde \nu_\epsilon}(\tilde \rho_0) \le \tilde O(n)$ for all $q \ge 1$.
For $\epsilon < \frac{1}{L}$, $(I + \epsilon \nabla f)^{-1}$ is a bijective map.
Then by Lemma~\ref{Lem:RenyiBij}, with $\rho_0 = (I + \epsilon \nabla f)^{-1}_\# \tilde \rho_0$, we have $R_{q,\nu_\epsilon}(\rho_0) = R_{q,\tilde \nu_\epsilon}(\tilde \rho_0) \le \tilde O(n)$, as desired.
\end{proof}

\subsection{Auxiliary results for Poincar\'e inequality}

We recall the decay of Poincar\'e constant along Gaussian convolution.

\begin{lemma}[{\cite[Lemma~20]{VW19}}]\label{Lem:PGaussianConv}
Suppose $\nu$ satisfies Poincar\'e inequality with constant $\alpha > 0$.
For $t > 0$, the distribution $\tilde \nu_t = \nu \ast \mathcal{N}(0,\,2tI)$ satisfies Poincar\'e inequality with constant $\big(\frac{1}{\alpha}+2t \big)^{-1}$.
\end{lemma}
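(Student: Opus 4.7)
The plan is to realize $\tilde\nu_t$ as the law of $X + Y$ with $X \sim \nu$ and $Y \sim \N(0, 2tI)$ independent, and then decompose the variance by conditioning on $X$. For any smooth test function $g$ with enough integrability, set $h(x) = \E_Y[g(x+Y)]$. By the law of total variance,
\begin{equation*}
\Var_{\tilde\nu_t}(g) \;=\; \Var_\nu(h) \;+\; \E_\nu\bigl[\Var_Y(g(X+Y))\bigr].
\end{equation*}
The strategy is to control each piece by a separate Poincar\'e inequality: the inner variance by the Gaussian Poincar\'e inequality, and the outer one by the assumed Poincar\'e inequality for $\nu$.

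For the inner term, $\N(0,2tI)$ satisfies Poincar\'e with constant $\frac{1}{2t}$ (this follows by rescaling from the standard Gaussian, for which $\Var \le \E\|\nabla\cdot\|^2$), so
\begin{equation*}
\Var_Y(g(x+Y)) \;\le\; 2t\, \E_Y[\|\nabla g(x+Y)\|^2],
\end{equation*}
and averaging over $x \sim \nu$ gives a contribution of $2t\, \E_{\tilde\nu_t}[\|\nabla g\|^2]$. For the outer term, I would apply Poincar\'e for $\nu$ to $h$:
\begin{equation*}
\Var_\nu(h) \;\le\; \frac{1}{\alpha}\, \E_\nu[\|\nabla h\|^2].
\end{equation*}
Differentiating under the expectation yields $\nabla h(x) = \E_Y[\nabla g(x+Y)]$, and Jensen's inequality gives $\|\nabla h(x)\|^2 \le \E_Y[\|\nabla g(x+Y)\|^2]$, so $\E_\nu[\|\nabla h\|^2] \le \E_{\tilde\nu_t}[\|\nabla g\|^2]$. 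Adding the two bounds produces
\begin{equation*}
\Var_{\tilde\nu_t}(g) \;\le\; \Bigl(\tfrac{1}{\alpha} + 2t\Bigr)\, \E_{\tilde\nu_t}[\|\nabla g\|^2],
\end{equation*}
which is the desired Poincar\'e inequality with constant $(\tfrac{1}{\alpha}+2t)^{-1}$.

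There is no real obstacle: this is the standard tensorization-plus-total-variance argument, and it is the exact analogue of Lemma~\ref{Lem:LSIGaussianConv} with variance replacing entropy (one can in fact view Poincar\'e as the linearization of LSI, matching the constants). The only routine checks are the justification of differentiation under the integral in $\nabla h = \E_Y[\nabla g(\cdot + Y)]$ and smoothness of $h$ so that the Poincar\'e inequality for $\nu$ applies, both of which hold under mild decay/integrability hypotheses on $g$ and are standard in this setting.
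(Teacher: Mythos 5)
Your proof is correct. The paper does not actually prove this lemma---it is imported verbatim from \cite[Lemma~20]{VW19}---so there is no in-paper argument to compare against; but your derivation (law of total variance conditioning on the $\nu$-component, Gaussian Poincar\'e with constant $\frac{1}{2t}$ for the inner variance, Jensen to push the gradient through $\nabla h(x) = \E_Y[\nabla g(x+Y)]$ for the outer variance) is the standard subadditivity-of-the-inverse-Poincar\'e-constant-under-convolution argument and yields exactly the stated constant $\bigl(\frac{1}{\alpha}+2t\bigr)^{-1}$.
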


We recall the relation between R\'enyi information and divergence under Poincar\'e inequality.

\begin{lemma}[{\cite[Lemma~17]{VW19}}]\label{Lem:RenyiPI}
Suppose $\nu$ satisfies Poincar\'e inequality with constant $\alpha > 0$.
Let $q \ge 2$.
For all $\rho$,
\begin{align*}
\frac{G_{q,\nu}(\rho)}{F_{q,\nu}(\rho)} \ge \frac{4\alpha}{q^2} \left(1-e^{-R_{q,\nu}(\rho)}\right). 
\end{align*}
\end{lemma}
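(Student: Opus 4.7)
The plan is to apply the Poincar\'e inequality directly to the natural test function $g = (\rho/\nu)^{q/2}$ and then convert the resulting variance bound into the R\'enyi-divergence form via a Hölder interpolation. First I would use the rewriting
\begin{align*}
F_{q,\nu}(\rho) = \E_\nu[g^2], \qquad G_{q,\nu}(\rho) = \frac{4}{q^2}\, \E_\nu[\|\nabla g\|^2]
\end{align*}
that already appears in~\eqref{Eq:InfoDef}. The Poincar\'e inequality applied to $g$ gives $\E_\nu[\|\nabla g\|^2] \ge \alpha(\E_\nu[g^2] - \E_\nu[g]^2)$, so dividing by $F_{q,\nu}(\rho)$ produces the preliminary bound
\begin{align*}
\frac{G_{q,\nu}(\rho)}{F_{q,\nu}(\rho)} \ge \frac{4\alpha}{q^2}\left(1 - \frac{\E_\nu[g]^2}{\E_\nu[g^2]}\right).
\end{align*}

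The remaining task, and the only nontrivial step, is to establish $\E_\nu[g]^2/\E_\nu[g^2] \le e^{-R_{q,\nu}(\rho)}$. Writing $h = \rho/\nu$, this amounts to $\E_\nu[h^{q/2}]^2 \le \E_\nu[h^q]^{(q-2)/(q-1)}$, since $\E_\nu[h^q] = e^{(q-1)R_{q,\nu}(\rho)}$ by definition. For this I would invoke the standard log-convexity of $p \mapsto \log \E_\nu[h^p]$ (an immediate consequence of Hölder's inequality) and interpolate the value $q/2$ between the endpoints $p_1 = 1$ and $p_2 = q$. The matching weight is $\lambda = q/(2(q-1))$, which lies in $[0,1]$ exactly because $q \ge 2$; log-convexity then gives
\begin{align*}
\log \E_\nu[h^{q/2}] \le \lambda \log \E_\nu[h] + (1-\lambda) \log \E_\nu[h^q].
\end{align*}
The crucial fact that simplifies this is $\E_\nu[h] = \int \rho\, dx = 1$, which kills the first term, and the remaining coefficient simplifies to $(q-2)/(2(q-1))$. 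Exponentiating and squaring yields the desired bound.

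Combining the two ingredients and substituting $\E_\nu[h^q]^{-1/(q-1)} = e^{-R_{q,\nu}(\rho)}$ finishes the proof. The main conceptual point is the interpolation: the hypothesis $q \ge 2$ is used exactly to ensure that $q/2 \in [1,q]$ so that Hölder can interpolate between $\E_\nu[h] = 1$ and the R\'enyi quantity $\E_\nu[h^q]$; the boundary case $q=2$ reduces the inequality to Poincar\'e applied to $\rho/\nu$ itself, which is the usual $\chi^2$-Poincar\'e bound expressed in R\'enyi form. Everything else is direct algebra, so I do not anticipate any serious obstacle beyond recognizing the right test function and the right interpolation endpoints.
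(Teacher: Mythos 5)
Your proof is correct; the paper does not reprove this lemma (it is quoted from~\cite{VW19}), and your argument---Poincar\'e applied to the test function $g=(\rho/\nu)^{q/2}$, followed by the H\"older interpolation $\E_\nu[h^{q/2}]^2\le\E_\nu[h^q]^{(q-2)/(q-1)}$ with $h=\rho/\nu$ and $\E_\nu[h]=1$---is exactly the standard route taken in the reference (your interpolation step is the monotonicity $R_{q/2,\nu}(\rho)\le R_{q,\nu}(\rho)$ in disguise, and $q\ge 2$ enters precisely where you say it does). No gaps.
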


\subsection{Proof of Theorem~\ref{Thm:RenyiRatePoincare}}\label{App:RenyiRatePoincare}

We first show R\'enyi divergence to $\nu_\epsilon$ converges along PLA when $\nu_\epsilon$ satisfies Poincar\'e inequality, at the same speed as the continuous-time Langevin dynamics.
The following is analogous to~\cite[Lemma~18]{VW19} for ULA.

\begin{lemma}\label{Lem:RenyiRateP}
Assume $\nu_\epsilon$ satisfies Poincar\'e inequality with constant $\beta > 0$.
Assume $\nu = e^{-f}$ is $L$-smooth, and $0 < \epsilon < \min\left\{\frac{1}{L}, \frac{1}{2\beta}\right\}$.
For $q \ge 2$, along PLA~\eqref{Eq:PLA},
\begin{align}\label{Eq:RenyiRateP}
R_{q,\nu_\epsilon}(\rho_k) \le
\begin{cases}
R_{q,\nu_\epsilon}(\rho_0) -\frac{\beta \epsilon k}{q} ~~ & \text{ if } R_{q,\nu_\epsilon}(\rho_0) \ge 1 \text{ and as long as } R_{q,\nu_\epsilon}(\rho_k) \ge 1, \\
e^{-\frac{\beta \epsilon k}{q}} R_{q,\nu_\epsilon}(\rho_0) ~~ & \text{ if } R_{q,\nu_\epsilon}(\rho_0) \le 1.
\end{cases}
\end{align}
\end{lemma}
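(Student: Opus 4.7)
The plan is to mirror the structure of the proof of Lemma~\ref{Lem:RenyiRateLSI}, establishing a one-step decrease of $R_{q,\nu_\epsilon}(\rho_k)$ and then iterating, but with the LSI gradient-domination inequality (Lemma~\ref{Lem:RenyiLSI}) replaced by the Poincar\'e-type inequality (Lemma~\ref{Lem:RenyiPI}). As before, I will decompose each PLA step $\rho_k \to \rho_{k+1}$ into two sub-steps: a Gaussian convolution $\tilde\rho_k = \rho_k \ast \N(0, 2\epsilon I)$, followed by a pushforward under the proximal map $T = (I + \epsilon \nabla f)^{-1}$, so that $\rho_{k+1} = T_\# \tilde\rho_k$. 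The biased limit admits the analogous decomposition $\nu_\epsilon = T_\# \tilde\nu_\epsilon$ with $\tilde\nu_\epsilon = \nu_\epsilon \ast \N(0, 2\epsilon I)$. Since $\epsilon < 1/L$ and $\nabla f$ is $L$-Lipschitz, $T$ is a bijection, so by Lemma~\ref{Lem:RenyiBij} the proximal sub-step preserves R\'enyi divergence: $R_{q,\nu_\epsilon}(\rho_{k+1}) = R_{q,\tilde\nu_\epsilon}(\tilde\rho_k)$. All the work therefore lies in controlling the convolution sub-step.

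For the convolution sub-step, interpolate via $\rho_{k,t} = \rho_k \ast \N(0,2tI)$ and $\nu_{\epsilon,t} = \nu_\epsilon \ast \N(0,2tI)$ for $0 \le t \le \epsilon$. By Lemma~\ref{Lem:PGaussianConv} together with $\epsilon \le 1/(2\beta)$, the measure $\nu_{\epsilon,t}$ satisfies Poincar\'e inequality with constant at least $(1/\beta + 2\epsilon)^{-1} \ge \beta/2$. Combining Lemma~\ref{Lem:RenyiHeat} with Lemma~\ref{Lem:RenyiPI} applied at constant $\beta/2$ yields the differential inequality
\begin{align*}
\frac{d}{dt} R_{q,\nu_{\epsilon,t}}(\rho_{k,t}) \;=\; -q\,\frac{G_{q,\nu_{\epsilon,t}}(\rho_{k,t})}{F_{q,\nu_{\epsilon,t}}(\rho_{k,t})} \;\le\; -\frac{2\beta}{q}\left(1 - e^{-R_{q,\nu_{\epsilon,t}}(\rho_{k,t})}\right).
\end{align*}
This is the Poincar\'e analogue of the linear decay bound used in Lemma~\ref{Lem:RenyiRateLSI}, and the remaining task is to convert it into the two-regime statement.

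I then split into cases according to the size of $R := R_{q,\nu_{\epsilon,t}}(\rho_{k,t})$. In the regime $R \ge 1$, I use $1 - e^{-R} \ge 1 - e^{-1} \ge 1/2$ to obtain $\tfrac{d}{dt} R \le -\beta/q$; integrating over $[0,\epsilon]$ gives the per-step linear decrease $R_{q,\tilde\nu_\epsilon}(\tilde\rho_k) \le R_{q,\nu_\epsilon}(\rho_k) - \beta\epsilon/q$, which after the proximal step yields $R_{q,\nu_\epsilon}(\rho_{k+1}) \le R_{q,\nu_\epsilon}(\rho_k) - \beta\epsilon/q$. Iterating as long as the R\'enyi divergence stays above $1$ gives the first case of \eqref{Eq:RenyiRateP}. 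In the regime $R \le 1$, I use the elementary inequality $1 - e^{-R} \ge R/2$ for $R \in [0,1]$ (verifiable by checking $g(R) = 1-e^{-R}-R/2$ satisfies $g(0)=0$ and $g(1) > 0$, with $g'(R) = e^{-R} - 1/2$ sign-changing only once inside the interval), to get $\tfrac{d}{dt} R \le -(\beta/q)R$, hence $R_{q,\tilde\nu_\epsilon}(\tilde\rho_k) \le e^{-\beta\epsilon/q} R_{q,\nu_\epsilon}(\rho_k)$. Since the exponential decay keeps $R$ below $1$, iterating yields the second case of \eqref{Eq:RenyiRateP}.

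The main subtlety, rather than a deep obstacle, is the bookkeeping of the regime transition: in the linear case we need $R$ to stay above $1$ throughout the continuous interpolation $[0,\epsilon]$, not merely at the PLA iterates; this follows because the bound $\tfrac{d}{dt} R \le -\beta/q$ shows $R_{q,\nu_{\epsilon,t}}(\rho_{k,t})$ is non-increasing in $t$, so the hypothesis $R_{q,\nu_\epsilon}(\rho_{k+1}) \ge 1$ implies $R \ge 1$ on the whole interval. Everything else consists of routine integration and invocation of the named lemmas, so I expect no further technical difficulty.
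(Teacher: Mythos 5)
Your proposal is correct and follows essentially the same route as the paper: the same two-sub-step decomposition (heat flow then proximal pushforward handled by Lemma~\ref{Lem:RenyiBij}), the same differential inequality from Lemmas~\ref{Lem:PGaussianConv}, \ref{Lem:RenyiHeat}, and~\ref{Lem:RenyiPI}, and the same two-regime integration. The only cosmetic differences are that you justify $1-e^{-R}\ge R/2$ directly rather than via $\frac{1-e^{-R}}{R}\ge\frac{1}{1+R}$, and you make explicit the monotonicity argument ensuring $R\ge 1$ throughout the interpolation interval, which the paper leaves implicit.
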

\begin{proof}
Following the proof of Lemma~\ref{Lem:RenyiRateLSI}, 
we decompose each step of PLA~\eqref{Eq:PLA} into a sequence of two steps:
\begin{subequations}
\begin{align}
\tilde \rho_k &= \rho_k \ast \N(0,2\epsilon I), \label{Eq:PLA-3} \\
\rho_{k+1} &= (I + \epsilon \nabla f)^{-1}_\# \tilde \rho_k.  \label{Eq:PLA-4}
\end{align}
\end{subequations}

The first step~\eqref{Eq:PLA-3} is convolution with a Gaussian, which is the result of evolving along the heat flow at time $\epsilon$.
For $0 \le t \le \epsilon$, let $\rho_{k,t} = \rho_k \ast \N(0,2t I)$ and $\nu_{\epsilon,t} = \nu_\epsilon \ast \N(0,2tI)$,
so $\tilde \rho_k = \rho_{k,\epsilon}$, and let $\tilde \nu_\epsilon =  \nu_{\epsilon,\epsilon}$.
Since $\nu_\epsilon$ satisfies Poincar\'e inequality with constant $\beta$, by Lemma~\ref{Lem:PGaussianConv}, $\nu_{\epsilon,t}$ satisfies Poincar\'e inequality with constant $(\frac{1}{\beta}+2t)^{-1} \ge (\frac{1}{\beta}+2\epsilon )^{-1} \ge \frac{\beta}{2}$ for $0 \le t \le \epsilon \le \frac{1}{2\beta}$.
Then by Lemma~\ref{Lem:RenyiHeat} and Lemma~\ref{Lem:RenyiPI},
\begin{align*}
\frac{d}{dt} R_{q, \nu_{\epsilon,t}}(\rho_{k,t}) = -q\frac{G_{q,\nu_{\epsilon,t}}(\rho_{k,t})}{F_{q,\nu_{\epsilon,t}}(\rho_{k,t})} \le -\frac{2\beta}{q} \left(1-e^{-R_{q,\nu_{\epsilon,t}}(\rho_{k,t})}\right).
\end{align*}
We consider two cases:
\begin{enumerate}
  \item Suppose $R_{q,\tilde \nu_\epsilon}(\tilde \rho_k) = R_{q, \nu_{\epsilon,\epsilon}}(\rho_{k,\epsilon}) \ge 1$.
  Then for $0 \le t \le \epsilon$ we have $1-e^{-R_{q,\nu_{\epsilon,t}}(\rho_{k,t})} \ge 1-e^{-1} > \frac{1}{2}$, so $\frac{d}{dt} R_{q, \nu_{\epsilon,t}}(\rho_{k,t}) \le -\frac{\beta}{q}$, which implies $R_{q,\tilde \nu_\epsilon}(\tilde \rho_k) =  R_{q, \nu_{\epsilon,\epsilon}}(\rho_{k,\epsilon}) \le R_{q,\nu_\epsilon}(\rho_k) - \frac{\beta \epsilon}{q}$.
 \item Suppose $R_{q,\nu_\epsilon}(\rho_k) \le 1$, so $R_{q,\nu_{\epsilon,t}}(\rho_{k,t}) \le 1$ and $\frac{1-e^{-R_{q,\nu_{\epsilon,t}}(\rho_{k,t})}}{R_{q,\nu_{\epsilon,t}}(\rho_{k,t})} \ge \frac{1}{1+R_{q,\nu_{\epsilon,t}}(\rho_{k,t})} \ge \frac{1}{2}$.
Then
$\frac{d}{dt} R_{q,\nu_{\epsilon,t}}(\rho_{k,t}) \le -\frac{\beta}{q} R_{q,\nu_{\epsilon,t}}(\rho_{k,t})$, which implies
$R_{q,\tilde\nu_\epsilon}(\tilde \rho_k) = R_{q,\nu_{\epsilon,\epsilon}}(\rho_{k,\epsilon}) \le e^{-\frac{\beta \epsilon}{q}} R_{q,\nu_\epsilon}(\rho_k)$.
\end{enumerate}

In the second step~\eqref{Eq:PLA-4} we apply the proximal map $T(x) = (I + \epsilon \nabla f)^{-1}(x)$, which is a bijection since $\nabla f$ is $L$-Lipschitz and $\epsilon < \frac{1}{L}$.
Then by Lemma~\ref{Lem:RenyiBij},
\begin{align}\label{Eq:Renyi-Calc3}
R_{q,\nu_\epsilon}(\rho_{k+1}) = R_{q, T_\# \tilde \nu_\epsilon}(T_\# \tilde \rho_k) = R_{q,\tilde \nu_\epsilon}(\tilde \rho_k).
\end{align}
Combining~\eqref{Eq:Renyi-Calc3} with the two cases above gives us in one step of PLA:
\begin{align*}
R_{q,\nu_\epsilon}(\rho_{k+1})
\le
\begin{cases}
R_{q,\nu_\epsilon}(\rho_k) - \frac{\beta \epsilon}{q} ~~& \text{ if } R_{q,\nu_\epsilon}(\rho_k) \ge R_{q,\nu_\epsilon}(\rho_{k+1}) \ge 1 \\
e^{-\frac{\beta \epsilon}{q}} R_{q,\nu_\epsilon}(\rho_k) & \text{ if } R_{q,\nu_\epsilon}(\rho_k) \le 1.
\end{cases}
\end{align*}
By iterating, we conclude that
\begin{align*}
R_{q,\nu_\epsilon}(\rho_k) \le
\begin{cases}
R_{q,\nu_\epsilon}(\rho_0) -\frac{\beta \epsilon k}{q} ~~ & \text{ if } R_{q,\nu_\epsilon}(\rho_0) \ge 1 \text{ and as long as } R_{q,\nu_\epsilon}(\rho_k) \ge 1, \\
e^{-\frac{\beta \epsilon k}{q}} R_{q,\nu_\epsilon}(\rho_0) ~~ & \text{ if } R_{q,\nu_\epsilon}(\rho_0) \le 1
\end{cases}
\end{align*}
as desired.
\end{proof}

\begin{proof}[Proof of Theorem~\ref{Thm:RenyiRatePoincare}]
By Lemma~\ref{Lem:RenyiRateP}, after $k_0$ iterations we have $R_{2q,\nu_\epsilon}(\rho_{k_0}) \le 1$.
Applying the second case of Lemma~\ref{Lem:RenyiRateP} starting from $k_0$ gives $R_{2q,\nu_\epsilon}(\rho_k) \le e^{-\frac{\beta \epsilon (k-k_0)}{2q}} R_{2q,\nu_\epsilon}(\rho_{k_0}) \le e^{-\frac{\beta \epsilon (k-k_0)}{2q}}$.
Then by Lemma~\ref{Lem:RenyiDecomp},
\begin{align*}
R_{q,\nu}(\rho_k) &\le \left(\frac{q-\frac{1}{2}}{q-1}\right) R_{2q,\nu_\epsilon}(\rho_k) + R_{2q-1,\nu}(\nu_\epsilon) \\
&\le \left(\frac{q-\frac{1}{2}}{q-1}\right) e^{-\frac{\beta \epsilon (k-k_0)}{2q}} + R_{2q-1,\nu}(\nu_\epsilon)
\end{align*}
as desired.
\end{proof}

\section{Discussion}
\label{Sec:Disc}

In this paper we study the Proximal Langevin Algorithm (PLA) for sampling in $\R^n$ under isoperimetry: log-Sobolev inequality (LSI) or Poincar\'e inequality.
We prove an iteration complexity in KL divergence under LSI and third-order smoothness that matches the fastest known rate for sampling under LSI, with a better dependence on the LSI constant.
We also prove iteration complexities in R\'enyi divergence assuming the biased limit satisfies either LSI or Poincar\'e inequality;
the iteration complexity under Poincar\'e is a factor of $n$ larger than the complexity under LSI.

There are many directions for further study.
Our results assume second or third-order smoothness; it is interesting to study the convergence of PLA under weaker smoothness assumptions.
We can try to bound the bias of PLA in R\'enyi divergence.
We can try to extend our results for approximate proximal solvers, for example using optimistic or extra-gradient methods.
We can study whether proximal versions of other algorithms, such as the underdamped Langevin dynamics, have faster convergence.
It is also interesting to study whether symmetrized methods~\cite{W18} have smaller bias.
Another intriguing question is how to perform affine-invariant sampling in discrete time.

\paragraph{Acknowledgments.}
The author thanks Santosh Vempala for valuable and insightful discussions.

{\small
\bibliographystyle{plain}
%\bibliography{proximal_refs.bib}
\bibliography{draft2_v1_arxiv_v1.bbl}
}

\end{document}